\documentclass[pdflatex,sn-basic,Numbered]{sn-jnl}

\usepackage{graphicx}
\usepackage{multirow}
\usepackage{amsmath,amssymb,amsfonts}
\usepackage{booktabs}
\usepackage{tabularx}
\usepackage{url}
\usepackage{xcolor}
\definecolor{p3agent}{HTML}{4E79A7}    
\definecolor{p3cap}{HTML}{F28E2B}      
\definecolor{p3gov}{HTML}{59A14F}      
\definecolor{p3exec}{HTML}{9C9C9C}     
\definecolor{p3safe}{HTML}{76B7B2}     
\definecolor{p3alert}{HTML}{E15759}    
\definecolor{p3human}{HTML}{B07AA1}    
\definecolor{p3neutral}{HTML}{EDC948}  
\usepackage{tikz}
\usetikzlibrary{positioning, arrows.meta, fit, backgrounds, calc, shapes.geometric}
\usepackage{algorithm}
\usepackage{algpseudocode}
\usepackage{enumitem}
\setlist[itemize]{leftmargin=1.5em, itemsep=3pt, parsep=0pt}
\setlist[enumerate]{leftmargin=1.5em, itemsep=3pt, parsep=0pt}
\newcommand{\cmark}{\checkmark}
\newcommand{\pmark}{$\sim$}
\newcommand{\xmark}{--}

\definecolor{orcidlogocol}{HTML}{A6CE39}
\newcommand{\orcidicon}{%
  \mbox{\begin{tikzpicture}[x=1ex,y=1ex,baseline={([yshift=-0.45ex]current bounding box.center)}]%
    \fill[orcidlogocol] (0,0) circle (0.95);%
    \node[white,font=\fontsize{3pt}{3pt}\selectfont\bfseries\sffamily] at (0,0) {iD};%
  \end{tikzpicture}}%
}
\newcommand{\orcidlink}[1]{\href{https://orcid.org/#1}{\orcidicon}}

\newtheoremstyle{p3plain}%
  {6pt}{6pt}{\itshape}{0pt}{\bfseries}{.}{ }{}
\newtheoremstyle{p3definition}%
  {6pt}{6pt}{\normalfont}{0pt}{\bfseries}{.}{ }{}
\theoremstyle{p3definition}

\theoremstyle{p3plain}
\newtheorem{proposition}{Proposition}

\usepackage{placeins}

\begin{document}

\title[Harnessing Embodied Agents]{Harnessing Embodied Agents: Runtime Governance for Policy-Constrained Execution}

\author[1]{\fnm{Xue} \sur{Qin}~\orcidlink{0009-0009-3642-2663}}\email{qinxue@me.com}

\author[2]{\fnm{Simin} \sur{Luan}~\orcidlink{0000-0003-1138-1892}}\email{luansiminiot@gmail.com}

\author[3]{\fnm{John} \sur{See}~\orcidlink{0000-0003-3005-4109}}\email{J.See@hw.ac.uk}

\author[5]{\fnm{Zeyd} \sur{Boukhers}~\orcidlink{0000-0001-9778-9164}}\email{zeyd.boukhers@fit.fraunhofer.de}

\author*[4]{\fnm{Cong} \sur{Yang}~\orcidlink{0000-0002-8314-0935}}\email{cong.yang@suda.edu.cn}

\author*[2]{\fnm{Zhijun} \sur{Li}~\orcidlink{0000-0001-9129-9957}}\email{lizhijun\_os@hit.edu.cn}

\affil*[1]{\orgdiv{School of Software}, \orgname{Harbin Institute of Technology}, \orgaddress{\city{Harbin}, \country{China}}}

\affil[2]{\orgdiv{School of Computer Science and Technology}, \orgname{Harbin Institute of Technology}, \orgaddress{\city{Harbin}, \country{China}}}

\affil[3]{\orgdiv{School of Mathematical and Computer Sciences}, \orgname{Heriot-Watt University, Malaysia Campus}, \orgaddress{\city{Putrajaya}, \country{Malaysia}}}

\affil[4]{\orgdiv{School of Future Science and Engineering}, \orgname{Soochow University}, \orgaddress{\city{Suzhou}, \country{China}}}

\affil[5]{\orgname{Fraunhofer Institute for Applied Information Technology}, \orgaddress{\city{Sankt Augustin}, \country{Germany}}}

\abstract{Embodied Agents are evolving from passive reasoning systems into active executors that interact with tools, robots, and physical environments. Once an agent gains execution authority, the central challenge shifts from how to make it act to how to keep its actions governable at runtime. Existing approaches embed safety, recovery, and decision constraints inside the agent loop, making execution control difficult to standardize, audit, and adapt across environments. We propose a runtime governance framework for policy-constrained execution that separates agent cognition from execution oversight. Governance is externalized into a dedicated runtime layer performing policy checking, capability admission, execution monitoring, rollback, and human override. We formalize the control boundary among a persistent Embodied Agent, modular Capability Packages, and the governance layer, and define a policy-constrained execution pipeline evaluated under controlled simulation. Over 1000 randomized trials, the framework achieves $96.2\%\pm2.7\%$ interception of unauthorized actions, reduces unsafe continuation from 100\% to $22.2\%\pm3.1\%$ under runtime drift, and attains $90.7\%\pm3.0\%$ recovery success with full policy compliance. Comparison with five baselines, including AutoRT-style constitution filtering and RoboGuard-style two-stage guardrails, shows that pre-execution filtering is equally effective across governance-aware methods, while only the proposed framework provides continuous runtime detection (RVDR\,$= 61.3\%$ vs.\ 0\%) and structured recovery (all $p<0.001$). A sensitivity sweep across the full detection range confirms a genuine detection--continuation trade-off. This work argues future embodied systems should be designed for governable execution.}

\keywords{Embodied AI, Runtime Governance, Intelligent Agents, Policy-Constrained Execution, Agent Safety, Autonomous Systems}

\maketitle

\section{Introduction}
\label{sec:intro}

Embodied Agents are moving beyond passive reasoning and conversational assistance toward persistent execution in the physical world. Recent progress in large models~\citep{brohan2023rt1,brohan2023rt2,driess2023palme}, tool use~\citep{schick2023toolformer,yao2023react,vemprala2024chatgpt_robotics}, and robotic policy learning~\citep{ahn2022saycan,liang2023code} has made it increasingly plausible for a single agentic system to interpret goals, invoke capabilities, interact with robots and software tools, and complete long-horizon tasks across dynamic environments~\citep{huang2022inner,wang2023voyager}. In this emerging setting, however, the central challenge is no longer only how to make an agent act, but how to make its action governable at runtime. Figure~\ref{fig:teaser} contrasts ungoverned execution with our proposed governed execution, illustrating the role of the Runtime Governance Layer that mediates between agent cognition and physical execution.

This shift is fundamental. A system that can execute is not necessarily a system that can execute under explicit constraints, remain observable during operation, recover from failures in a controlled manner, or yield control when human intervention becomes necessary. As Embodied Agents gain access to tools, actuators, sensors, and real-world execution pathways, the cost of runtime failure becomes qualitatively different from that in purely digital settings. Errors are no longer limited to incorrect responses or failed API calls; they may instead result in unsafe motions, unauthorized actions, unstable recovery behaviors, or persistent deviations from deployment policy.

\begin{figure}[t]
\centering
\resizebox{\textwidth}{!}{%
\begin{tikzpicture}[
    >=Stealth,
    every node/.style={font=\footnotesize},
    agentbox/.style={draw=p3agent!60, rounded corners=3pt, fill=p3agent!10, minimum height=0.75cm, minimum width=1.7cm, align=center, line width=0.5pt},
    capbox/.style={draw=p3cap!60, rounded corners=3pt, fill=p3cap!10, minimum height=0.75cm, minimum width=1.7cm, align=center, line width=0.5pt},
    govbox/.style={draw=p3gov!60, rounded corners=3pt, fill=p3gov!10, minimum height=0.75cm, minimum width=1.7cm, align=center, line width=0.5pt},
    execbox/.style={draw=p3exec!60, rounded corners=3pt, fill=p3exec!8, minimum height=0.75cm, minimum width=1.7cm, align=center, line width=0.5pt},
    dangerlabel/.style={font=\scriptsize\itshape, text=p3alert!80!black},
    safelabel/.style={font=\scriptsize\itshape, text=p3gov!70!black},
    elabel/.style={font=\tiny\sffamily, text=black!55, inner sep=1pt},
    titlefont/.style={font=\footnotesize\bfseries\sffamily},
    dashbox/.style={draw, dashed, rounded corners=5pt, inner sep=8pt, line width=0.5pt},
]

\node[titlefont, text=p3alert!80!black] (titleL) at (-7.0, 3.5) {(a) Ungoverned Execution};

\node[agentbox] (agentL) at (-8.5, 1.6) {Embodied\\[-1pt] Agent};
\node[capbox]   (capL)   at (-5.5, 1.6) {Capability\\[-1pt] Package};
\node[execbox]  (execL)  at (-7.0, 0.0) {Physical\\[-1pt] Environment};

\draw[->, semithick] (agentL) -- (capL) node[midway, above, elabel] {invoke};
\draw[->, semithick] (capL) -- (execL) node[midway, right, elabel, xshift=2pt] {execute};
\draw[->, semithick, dashed, p3alert!50] (agentL) |- (execL) node[pos=0.75, left, elabel, xshift=-2pt] {direct};

\node[dangerlabel, text width=2.2cm, align=center] at (-7.0, -1.4) {No policy check\\No monitoring\\No recovery};

\draw[p3alert!40, line width=1.2pt] (-9.3, -2.1) -- (-4.7, -2.1);

\node[font=\normalsize\bfseries, text=black!45] at (-3.6, 0.7) {$\Longrightarrow$};
\node[font=\small\sffamily\bfseries, text=black!85, text width=1.8cm, align=center] at (-3.6, -0.25) {add runtime\\governance};

\node[titlefont, text=p3gov!80!black] (titleR) at (4.5, 3.5) {(b) Governed Execution (Ours)};

\node[agentbox] (agentR) at (-1.0, 1.6) {Embodied\\[-1pt] Agent};

\node[govbox] (admit)  at (2.0, 1.6) {Capability\\[-1pt] Admission};
\node[govbox] (policy) at (5.0, 1.6) {Policy\\[-1pt] Guard};
\node[govbox] (human)  at (8.0, 1.6) {Human\\[-1pt] Override};
\node[govbox] (watch)  at (3.5, 0.0) {Execution\\[-1pt] Watcher};
\node[govbox] (recov)  at (6.5, 0.0) {Recovery\\[-1pt] Manager};

\begin{scope}[on background layer]
\node[dashbox, fill=p3gov!4, draw=p3gov!50!black,
      fit=(admit)(policy)(watch)(recov)(human),
      label={[font=\tiny\sffamily\bfseries, text=p3gov!50!black]above:Runtime Governance Layer}] (govlayer) {};
\end{scope}

\node[capbox]  (capR)  at (3.5, -1.7) {Capability\\[-1pt] Package};
\node[execbox] (execR) at (6.5, -1.7) {Physical\\[-1pt] Environment};

\draw[->, semithick] (agentR) -- (admit) node[pos=0.35, above, elabel] {propose};
\draw[->, semithick] (admit) -- (policy) node[midway, above, elabel] {check};
\draw[->, semithick] (policy) -- (human) node[midway, above, elabel] {escalate};
\draw[->, semithick] (policy) -- (watch) node[midway, right, elabel, xshift=2pt] {launch};
\draw[->, semithick] (watch) -- (recov) node[midway, above, elabel] {trigger};
\draw[->, semithick, p3gov!60!black] (watch) -- (capR) node[pos=0.65, left, elabel, xshift=-2pt] {govern};
\draw[->, semithick] (capR) -- (execR) node[midway, above, elabel] {execute};

\node[safelabel, text width=3.4cm, align=center] at (5.0, -2.5) {Policy-checked $\cdot$ Monitored $\cdot$ Recoverable};

\draw[->, semithick, dashed, p3agent!40] (recov.south) -- (execR.north) node[pos=0.5, right, font=\tiny\sffamily, text=black!50] {audit log};

\end{tikzpicture}%
}
\caption{\textbf{Core idea.} (a)~In ungoverned execution, the Embodied Agent directly invokes capabilities without policy checking, runtime monitoring, or recovery. (b)~Our framework interposes a \emph{Runtime Governance Layer} between agent cognition and physical execution: every capability invocation passes through admission control, policy checking, and execution monitoring, with recovery and human override at runtime. This separation makes execution policy-constrained, observable, recoverable, and auditable without modifying the agent model.}
\label{fig:teaser}
\end{figure}
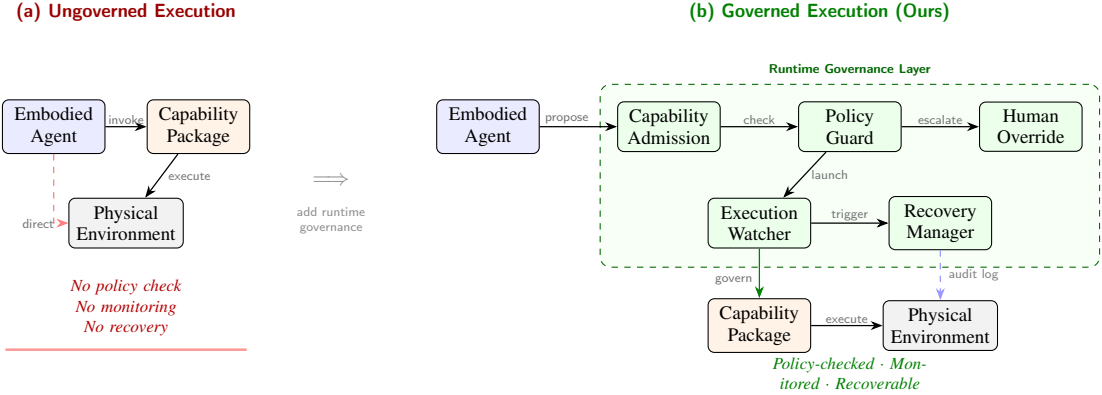

Existing Embodied Agent systems have made important progress in planning, action generation, policy learning, and task execution~\citep{huang2023voxposer,ahn2022saycan,embodied_survey}. Yet in many cases, the logic that governs execution remains entangled with the agent itself. Safety heuristics, recovery strategies, approval conditions, and action constraints are often embedded inside prompts, agent loops, task policies, or ad hoc controller logic. While such designs may be effective for narrow tasks or tightly controlled demonstrations, they become increasingly difficult to standardize, audit, verify, and adapt when the same Embodied Agent must operate across simulation, real robots, changing deployment contexts, and human-facing environments.

This paper argues that embodied intelligence requires both stronger agents and stronger runtime governance. We refer to this perspective as \textbf{harnessing Embodied Agents}: instead of assuming that the agent itself should internalize all safety, recovery, and execution control logic, we propose that these responsibilities should be externalized into a dedicated Runtime Governance Layer. The agent remains responsible for task understanding, planning, and capability invocation; the governance layer determines whether, when, and how execution may proceed under explicit policy constraints. Our focus is therefore on the \emph{execution boundary}, the operational layer at which an Embodied Agent transitions from intention to action, and where capability admission, policy enforcement, execution monitoring, rollback handling, and human override must be made explicit.

Based on this perspective, we propose a runtime governance framework for \textbf{policy-constrained execution}. The governance layer operates on an explicit, inspectable policy knowledge base comprising admission rules, constraint predicates, environment profiles, and recovery strategies, enabling knowledge-driven runtime decision-making over embodied execution. Structurally, the layer instantiates a Monitor--Analyze--Plan--Execute (MAPE-K) feedback loop~\citep{kephart2003autonomic} over a policy knowledge base: the Execution Watcher monitors runtime telemetry, the Policy Guard analyzes compliance, the Recovery Manager plans corrective action, and the Capability Package executes under governance authority. The framework separates three roles often conflated in current embodied systems: the persistent Embodied Agent as the decision-making subject, the Capability Package as the executable unit, and the Runtime Governance Layer as the authority that constrains and supervises execution. This separation builds on the single-agent runtime architecture and Embodied Capability Module (ECM) abstractions introduced in prior work~\citep{qin2026aeros}. This separation is motivated by three observations: (i)~embodied systems increasingly require persistent agents that operate across tasks rather than isolated single-behavior policies; (ii)~the executable abilities available to such agents are becoming modular and updatable, making capability-level governance more practical than monolithic control; and (iii)~deployment environments differ substantially in their acceptable operational boundaries: actions permissible in simulation may be unsafe on a real robot or require approval in a human-shared setting.

We formalize the control boundary among these three entities and define a policy-constrained execution pipeline in which agent-proposed actions are mediated through capability admission, policy checking, execution oversight, anomaly-triggered interruption, rollback handling, and human override. Rather than treating governance as auxiliary safeguards, the proposed design treats it as a core runtime structure, making it possible to reason about governability independently of the agent model and to adapt operational policy without rewriting the agent itself.

It is important to clarify the scope of this work. This paper is not primarily an end-to-end robot controller paper, nor a model capability paper. It is a \emph{runtime governance framework paper} for embodied execution. Our goal is not to compete with agentic robot controllers~\citep{brohan2023rt2,ahn2022saycan} on task success metrics, but to address the complementary question of how embodied execution can remain policy-constrained, observable, recoverable, and auditable as agent capability increases.

The contributions of this paper are as follows:

\begin{enumerate}[leftmargin=*]
    \item We identify \textbf{runtime governance} as a distinct systems problem in embodied AI and argue that policy-constrained execution should be treated as a foundational design principle for Embodied Agent systems.

    \item We propose a framework for \textbf{harnessing Embodied Agents}, in which agent cognition is separated from execution oversight through an explicit Runtime Governance Layer.

    \item We formalize the control boundary among a persistent Embodied Agent, modular Capability Packages, and runtime governance mechanisms, and define a unified execution pipeline designed for simulation with a physical-robot deployment~path.

    \item We present an evaluation protocol covering unauthorized action interception, high-risk action gating, runtime recovery, and cross-environment policy adaptation, enabling systematic study of governable execution in embodied~settings.
\end{enumerate}

\textbf{What is new.} Prior embodied-AI systems incorporate individual safety mechanisms (collision avoidance, action filtering, human confirmation), and the autonomic-computing literature provides the MAPE-K reference model. Our contribution is neither a new safety mechanism nor a new feedback-loop architecture in isolation. What is new is the \emph{governance separation principle}: the systematic externalization of all runtime governance responsibilities (admission, policy enforcement, monitoring, recovery, and human override) into a dedicated knowledge-based layer that is (i)~formally independent of the agent model, (ii)~parameterized by an inspectable policy knowledge base rather than embedded in agent logic, and (iii)~provably complete in its mediation of the execution boundary (Proposition~\ref{prop:regimentation}) with quantifiable detection guarantees (Proposition~\ref{prop:bounded}). No prior work in embodied AI provides this combination: existing approaches either embed governance within the agent (making it unauditable and policy-coupled) or provide point solutions for a single governance function (monitoring without admission, or filtering without recovery). The separation principle yields a qualitatively different engineering property: the governance layer can be formally analyzed, independently tested, and reconfigured across deployment contexts without modifying the agent, its planner, or its learned policies. This is the core novelty claim of the paper.

\textbf{Terminology note.} Throughout this paper, we use the term \emph{runtime governance framework} (or simply \emph{governance framework}) to refer to the overall proposed system design, while \emph{Runtime Governance Layer} refers specifically to Stratum~3 in the architecture (Section~\ref{sec:architecture}), the concrete middleware component that mediates execution. The word ``harness'' in the title refers to the act of harnessing (constraining and directing) Embodied Agents, not to a separate technical concept. When discussing external work on ``harness engineering''~\citep{harness_engineering}, we retain their original terminology.

The remainder of this paper is organized as follows. Section~\ref{sec:related} reviews related work. Section~\ref{sec:method} presents our method in four parts: the problem and system model (Section~\ref{sec:formulation}), the runtime governance framework (Section~\ref{sec:architecture}), the policy-constrained execution pipeline (Section~\ref{sec:pipeline}), and a prototype implementation (Section~\ref{sec:prototype}). Section~\ref{sec:experiments} presents the evaluation protocol. Section~\ref{sec:discussion} discusses implications and limitations. Section~\ref{sec:conclusion} concludes.

\section{Related Work}
\label{sec:related}

\subsection{Agentic Systems and the Rise of Runtime Harnesses}

Recent agentic systems have shifted attention from passive language interaction toward systems that can invoke tools, operate software environments, and persist across multi-step tasks~\citep{wang2024llmagent_survey,lu2023chameleon}. In this context, the surrounding execution environment has become increasingly important. Recent discussions around \emph{harness engineering} frame this shift as a move away from optimizing the agent in isolation and toward designing the tooling, guardrails, evaluations, and runtime control surfaces that keep agents ``in check''~\citep{harness_engineering}. This emerging view is particularly relevant to our work because it highlights that the practical bottleneck is no longer only agent intelligence, but the systems layer that mediates execution.

At the same time, recent work around OpenClaw-like ecosystems~\citep{openclaw} further illustrates both the promise and risk of execution-capable agents. For example, recent studies on OpenClaw analyze how personalized local agents can be manipulated or subverted~\citep{openclaw}, emphasizing that powerful execution authority creates new attack surfaces and governance problems beyond ordinary chatbot settings. These developments reinforce the broader argument of this paper: once agents can act, runtime governance becomes a core systems concern rather than an optional safety add-on.

However, most current discussions of harnesses remain centered on software agents, coding agents, or desktop execution environments. Our work differs in that it studies runtime governance in \textbf{embodied} settings, where execution unfolds over time in physical environments, affects actuators and sensors, and may require interruption, rollback, and human takeover under deployment-specific policy constraints. In this sense, we extend the emerging harness perspective into embodied AI and robot software systems.

\subsection{Embodied Agents and Long-Horizon Robotic Execution}

Embodied AI has recently moved toward increasingly capable agentic control of robot behavior, especially in language-conditioned and long-horizon settings. SayCan~\citep{ahn2022saycan} grounds language commands in robotic affordances to select feasible actions, Code as Policies~\citep{liang2023code} generates robot control code directly from language instructions, and RT-2~\citep{brohan2023rt2} trains vision-language-action models that unify perception and control into a single policy. These systems demonstrate that embodied agents can increasingly integrate reasoning, policy selection, and execution into unified control loops for long-horizon robotic tasks.

This line of work is important and complementary to our agenda rather than opposed to it. The systems above primarily address how an agentic controller can achieve scalable task execution. By contrast, we focus on a different systems question: once an Embodied Agent has execution authority, how should its actions be constrained, monitored, interrupted, recovered, and supervised at runtime? In other words, prior work has made strong progress on \textbf{how Embodied Agents act}, while our work focuses on \textbf{how embodied execution remains governable while acting}.

The broader Embodied Agent landscape includes systems that demonstrate persistent, adaptive agent behavior: Generative Agents~\citep{park2023generative} show that LLM-backed agents can maintain long-term behavioral coherence, ProgPrompt~\citep{singh2023progprompt} introduces programmatic prompting for situated robot task planning, and Voyager~\citep{wang2023voyager} demonstrates open-ended lifelong learning in embodied settings. Recent surveys~\citep{embodied_survey,liu2024embodied_survey} repeatedly identify safety, real-time enforcement, and runtime reliability as unresolved issues in practical deployment. Simulation platforms such as CARLA~\citep{carla2017} and Gazebo~\citep{gazebo} have enabled safety benchmarking but focus on environment fidelity rather than governance architecture. A recent benchmark, EARBench~\citep{zhu2024earbench}, directly evaluates physical risk awareness in foundation-model-based Embodied Agents, and SafeAgentBench~\citep{yin2024safeagentbench} provides a comprehensive benchmark for safe task planning of embodied LLM agents, reporting that even the most safety-conscious baseline achieves only a 10\% rejection rate for hazardous tasks. These results provide empirical evidence that current systems lack robust risk recognition, a gap our governance layer is designed to address. Fault detection, isolation, and recovery (FDIR) techniques from space robotics~\citep{tipaldi2015fdir} provide precedent for structured recovery pipelines, yet remain domain-specific. These surveys note the need for lightweight runtime monitoring, bounded-overhead safety enforcement, and robust execution under real-world uncertainty. Our work contributes to this gap by making runtime governance an explicit architectural and evaluation target rather than treating it as a secondary implementation detail.

\subsection{Safe Robotics, Runtime Monitoring, and Constraint Enforcement}

A large body of robotics research has addressed safety through controller design, reactive shielding~\citep{alshiekh2018shielding,konighofer2020shield}, control barrier functions~\citep{ames2019cbf,fisac2019general}, constrained policy optimization~\citep{achiam2017cpo,dalal2018safe}, and runtime monitoring~\citep{desai2017runtime}. Comprehensive surveys of safe reinforcement learning~\citep{garcia2015saferl,brunke2022safe} and the broader landscape of robust control for autonomous systems~\citep{recht2019tour} establish that embodied execution cannot be treated as unconstrained action generation. The Simplex architecture~\citep{sha2001simplex} pioneered the idea of a high-assurance controller that can override a high-performance but less verified one, prefiguring our notion of externalized runtime authority. Similarly, the Seldonian framework~\citep{thomas2019seldonio} demonstrated that safety constraints can be enforced during policy improvement without embedding them inside the learning objective itself. Industrial and service-robot safety standards~\citep{iso10218,iso13482} further codify the requirement for runtime monitoring, emergency stop, and human override in deployed systems. Systems-theoretic safety engineering~\citep{leveson2011engineering} and autonomous-vehicle safety analysis~\citep{koopman2019safety} reinforce the broader principle that safety is a control problem, not merely a component property.

Recent work on LLM-based anomaly detection for robotics~\citep{sinha2024anomaly} demonstrates that foundation models can perform real-time anomaly detection and reactive replanning, providing a complementary capability-level mechanism to our governance-layer watcher design.

Yet much of this literature focuses either on low-level safety enforcement or on task-specific pipelines, rather than on the systems question of how a persistent Embodied Agent should be governed across modular capabilities, shifting environment profiles, and human-supervised execution modes. Our work builds on the intuition behind runtime safety and monitoring, but lifts the perspective from controller-level safety to \textbf{runtime governance of policy-constrained execution}, covering the full lifecycle from capability proposal to admission, policy enforcement, watcher-driven intervention, rollback, and audit.

\subsection{Autonomic Computing, Self-Adaptive Systems, and Runtime Governance}

The idea of externalizing execution control into an enforceable middleware layer draws on at least three research traditions. First, the \emph{autonomic computing} vision~\citep{kephart2003autonomic} introduced the MAPE-K (Monitor--Analyze--Plan--Execute over a shared Knowledge base) reference model for self-managing systems; the Runtime Governance Layer in this paper follows the same closed-loop structure, with telemetry flowing from the Execution Watcher (Monitor) through the Policy Guard (Analyze) and Recovery Manager (Plan/Execute) over a policy knowledge base. Second, the broader \emph{self-adaptive systems} literature~\citep{weyns2019selfadaptive} studies how software can reconfigure its own behaviour at runtime in response to changing goals or environmental conditions, a concern directly mirrored by our environment-profile-parameterized governance. Third, \emph{formal verification of autonomous robotic systems}~\citep{luckcuck2019fv_robotic} has catalogued the gap between verified low-level controllers and the largely unverified supervisory layers that coordinate them, motivating the kind of explicit governance architecture proposed here.

Within multi-agent systems, the governance-layer concept has equally deep roots. The BDI (Belief--Desire--Intention) architecture~\citep{rao1995bdi} established the agent-theoretic foundation on which runtime governance operates: a persistent agent with explicit mental attitudes whose intentions can be intercepted and constrained before they become actions. Policy-driven management~\citep{sloman1994policy} showed early on that distributed systems benefit from declarative, externalized policy specifications rather than hard-coded control logic, a principle our policy knowledge base directly inherits. Electronic Institutions~\citep{esteva2001electronic} introduced the notion of a computational institution that constrains agent interactions through predefined dialogic frameworks and role-based protocols, establishing an early form of the ``governance layer'' concept that this paper instantiates for embodied execution. Law-Governed Interaction~\citep{minsky2000lgi} formalized the principle that interaction policies should be enforced by an external mechanism rather than left to each agent's internal compliance, a principle directly reflected in our design choice to externalize governance from agent cognition. Within the normative MAS tradition, Grossi et al.~\citep{grossi2007norm} drew a foundational distinction between \emph{regimentation} (blocking non-compliant actions before they occur) and \emph{sanctioning} (penalizing violations after the fact); our Admission Gate and Policy Checker implement regimentation, while the Recovery Manager handles post-violation response. Modgil and Luck~\citep{modgil2009monitoring} developed monitoring architectures for norm compliance in agent-based systems, providing a precursor to our Execution Watcher. Falcone et al.~\citep{falcone2011enforcement} grounded runtime enforcement monitors in a formal allow/modify/deny semantics that directly informs the four-valued output of our Policy Check function. The MAOP paradigm and its JaCaMo realization~\citep{boissier2013jacamo} demonstrated how organisational constraints can be separated from agent programs and enforced through dedicated environmental artifacts, an architectural pattern closely aligned with our three-entity control boundary. From a knowledge-based systems perspective, the governance layer can be understood as a runtime reasoning engine that evaluates structured policy knowledge (admission predicates, constraint rules, environment profiles) against the current execution state to produce governance decisions. This framing connects the normative-MAS tradition to the knowledge-based control and rule-based reasoning literature familiar to the intelligent-systems community.

What changes when this normative MAS tradition meets physically grounded execution is the nature of the state space and the consequences of enforcement failure. In software MAS, a blocked message or denied action has no physical residue; in embodied settings, continuous physical state (joint positions, contact forces, spatial proximity to humans) means that governance must reason over real-time telemetry rather than discrete message passing. Action irreversibility (a dropped object, a collision, a force exceedance) demands structured rollback and recovery mechanisms that have no direct analogue in classical norm enforcement. Environment profiles (Section~\ref{sec:formulation}) parameterize governance rules by deployment context, extending the norm-parameterization ideas of Morales et al.~\citep{morales2015norms}, who synthesized liberal normative systems that minimize unnecessary restriction, into the physical domain where ``unnecessary restriction'' may mean a robot freezing mid-task. Human override, treated in the adjustable-autonomy literature~\citep{scerri2002adjustable} as a transfer-of-control problem, becomes a structural pipeline stage because embodied tasks often cannot safely pause without explicit state preservation. Shield synthesis~\citep{bloem2015shield} and norm approximation~\citep{alechina2014norm} offer formal tools for constructing correct-by-construction enforcement and for reasoning about norm compliance under partial observability, respectively; both are relevant to future extensions of the framework presented here. In summary, our work instantiates the governance agenda of normative MAS into the embodied execution domain, retaining the core architectural insight (externalized, enforceable constraints) while addressing the physical grounding, real-time monitoring, and irreversibility challenges that embodied deployment introduces.

\subsection{Runtime Enforcement for LLM-Based Agents}

Recent work on safe LLM agents has started to explore \emph{runtime enforcement} more explicitly. \textbf{AutoRT}~\citep{ahn2024autort} demonstrates large-scale orchestration of robotic agents using a ``robot constitution'' that filters unsafe task proposals through an LLM critic. This is the closest prior system to our governance approach, though AutoRT's constitution operates as a pre-execution filter rather than a continuous Runtime Governance Layer with monitoring, recovery, and human override. \textbf{SafeEmbodAI}~\citep{zhang2024safeembodai} proposes a safety framework for mobile robots in embodied AI systems, incorporating secure prompting and state management to mitigate malicious command injection, but does not address runtime execution monitoring or structured recovery. \textbf{GuardAgent}~\citep{xiang2024guardagent} introduces a guard agent that dynamically generates guardrail code to protect target agents, achieving over 98\% accuracy on healthcare access control benchmarks, though it focuses on digital agent safety rather than embodied execution. \textbf{RoboGuard}~\citep{ravichandran2025roboguard} proposes a two-stage guardrail architecture for LLM-enabled robots that reduces unsafe plan execution from 92\% to below 2.5\%, which makes it the most directly comparable embodied safety system; however, RoboGuard focuses on pre-execution plan filtering rather than continuous runtime governance with recovery and rollback.

\textbf{AgentSpec}~\citep{agentspec} studies customizable runtime enforcement for LLM agents and evaluates enforcement on multiple domains, including an Embodied Agent scenario involving a robotic arm. NeMo Guardrails~\citep{rebedea2023nemo} provides a toolkit for programmable runtime rails on LLM applications, and recent work proposes safety guardrails specifically for LLM-enabled robots~\citep{ravichandran2025roboguard}. \textbf{TrustAgent}~\citep{hua2024trustagent} introduces an ``agent constitution'' with pre-planning, in-planning, and post-planning safety strategies, a pipeline structure that parallels our admission/monitoring/recovery stages, though applied to digital rather than Embodied Agents. \textbf{Pro2Guard}~\citep{wang2025pro2guard}, by the AgentSpec group, extends runtime enforcement to probabilistic violation prediction using Markov chain models, providing \emph{proactive} intervention before violations occur, a complementary capability to our reactive Execution Watcher. Runtime verification theory~\citep{leucker2009rv} provides the formal underpinnings for monitoring execution traces against temporal properties, a perspective that informs our Execution Watcher design.

Our work differs from all of the above in scope: we formulate governance around three explicit entities (persistent Embodied Agent, modular Capability Packages, and Runtime Governance Layer) and emphasize \emph{continuous} runtime governance, encompassing pre-execution filtering together with execution watching, recovery and rollback as governance functions, environment-sensitive policy profiles, and human override as a structural pipeline component. In that sense, our contribution is less about general safe-agent enforcement and more about defining a systems architecture for \textbf{governable embodied execution}.

\subsection{Position of This Work}

Prior work suggests a convergence: agentic systems are increasingly judged by their runtime structures, embodied systems are rapidly increasing the scope of persistent robotic execution, and safety enforcement is becoming more central as deployment moves beyond demonstrations. This paper sits at the intersection of these trends, contributing a dedicated runtime governance perspective for embodied AI.

To clarify how our contribution relates to prior work, we note the following decomposition. Several individual governance mechanisms in our framework have roots in autonomic computing, self-adaptive systems, normative MAS, and established engineering patterns: the overall closed-loop structure follows the MAPE-K reference model~\citep{kephart2003autonomic}; environment-profile-parameterized adaptation mirrors the self-adaptive systems paradigm~\citep{weyns2019selfadaptive}; capability admission draws on access-control models from operating systems and middleware, and on the regimentation concept from normative MAS~\citep{grossi2007norm}; policy-based gating extends constraint-checking mechanisms found in Simplex~\citep{sha2001simplex}, AgentSpec~\citep{agentspec}, AutoRT's robot constitution~\citep{ahn2024autort}, policy-driven management~\citep{sloman1994policy}, and the law-governed interaction paradigm~\citep{minsky2000lgi}; execution watching builds on runtime verification~\citep{leucker2009rv,desai2017runtime}, formal verification of robotic systems~\citep{luckcuck2019fv_robotic}, anomaly detection~\citep{sinha2024anomaly}, and norm monitoring architectures~\citep{modgil2009monitoring}; recovery/rollback draws on FDIR techniques from space robotics~\citep{tipaldi2015fdir}; and the externalized governance layer itself follows the organisational separation advocated by Electronic Institutions~\citep{esteva2001electronic} and MAOP/JaCaMo~\citep{boissier2013jacamo}. What is \emph{new} in this paper is: (i)~composing these mechanisms from autonomic computing, normative MAS, and runtime verification into a unified, continuous Runtime Governance Layer designed for physically grounded Embodied Agents; (ii)~the three-entity control boundary (agent / capability / governance) that separates cognition from execution oversight; (iii)~environment-profile-parameterized governance that adapts the same framework across deployment contexts without modifying the agent; and (iv)~the treatment of governance as a lifecycle process (not a one-time pre-check) with recovery, rollback, and human override as first-class pipeline stages. While the individual mechanisms echo established patterns across these traditions, no prior system combines all of them into a continuous governance lifecycle for physically grounded embodied~agents.

This paper addresses the single autonomous agent case; multi-agent extensions with shared-resource arbitration are a natural next step that builds on the per-agent governance primitives defined here.

To further clarify our positioning, Table~\ref{tab:comparison} compares the proposed framework with six representative approaches along key governance dimensions.

\begin{table}[t]
\centering
\caption{Qualitative comparison of runtime governance approaches along key governance dimensions. \cmark\ = supported, \pmark\ = partial, \xmark\ = not supported. Column abbreviations: Splx = Simplex; ASpec = AgentSpec; NeMo = NeMo Guardrails; AuRT = AutoRT; RoGd = RoboGuard; EI/N.\ = EI/Normative MAS~\citep{esteva2001electronic,grossi2007norm,boissier2013jacamo}.}
\label{tab:comparison}
\footnotesize
\begin{tabular*}{\textwidth}{@{\extracolsep{\fill}}lccccccc@{}}
\toprule
\textbf{Dimension} & \textbf{Splx} & \textbf{ASpec} & \textbf{NeMo} & \textbf{AuRT} & \textbf{RoGd} & \textbf{EI/N.} & \textbf{Ours} \\
\midrule
Capability admission     & \xmark & \pmark & \xmark & \cmark & \pmark & \cmark & \cmark \\
Policy-based gating      & \pmark & \cmark & \cmark & \cmark & \cmark & \cmark & \cmark \\
Runtime execution watch  & \cmark & \pmark & \xmark & \xmark & \xmark & \pmark & \cmark \\
Recovery \& rollback     & \cmark & \xmark & \xmark & \xmark & \xmark & \xmark & \cmark \\
Human override interface & \xmark & \xmark & \xmark & \xmark & \xmark & \pmark & \cmark \\
Audit \& telemetry       & \xmark & \pmark & \pmark & \pmark & \xmark & \pmark & \cmark \\
Environment profiles     & \xmark & \xmark & \xmark & \xmark & \xmark & \xmark & \cmark \\
Embodied-specific design & \pmark & \pmark & \xmark & \cmark & \cmark & \xmark & \cmark \\
\bottomrule
\end{tabular*}
\end{table}

\section{Method}
\label{sec:method}

This section presents our approach to runtime governance for policy-constrained execution of Embodied Agents. We first formalize the problem and define the system model (Section~\ref{sec:formulation}), then describe the runtime governance framework architecture (Section~\ref{sec:architecture}), detail the seven-stage policy-constrained execution pipeline (Section~\ref{sec:pipeline}), and outline the prototype implementation (Section~\ref{sec:prototype}).

\subsection{Problem Formulation and System Model}
\label{sec:formulation}

\subsubsection{Problem Statement}

We consider an Embodied Agent system that operates over extended time horizons and interacts with physical environments through executable capabilities. Unlike a purely conversational agent, such a system does not terminate at response generation. Instead, it must transform high-level intent into situated execution, often by invoking skills, tools, controllers, or robot-specific procedures under changing environmental and operational conditions.

The central problem addressed in this paper is the following:

\begin{quote}
\emph{How can an Embodied Agent be allowed to execute persistently and adaptively, while supporting governable execution under explicit runtime constraints?}
\end{quote}

This problem arises because embodied execution introduces a structural tension between \textbf{agent autonomy} and \textbf{operational control}. On the one hand, the agent must retain enough autonomy to interpret goals, compose actions, and respond to changing context. On the other hand, deployment systems must ensure that such actions remain bounded by safety requirements, environment-specific rules, organizational policy, and human supervisory authority. When these constraints are handled implicitly inside the agent loop, execution governance becomes difficult to inspect, standardize, or transfer across settings.

We therefore formulate \textbf{policy-constrained execution} as a systems problem rather than a purely model-level problem. In our view, the main challenge extends beyond whether an Embodied Agent can produce an action plan to whether the resulting execution can be admitted, monitored, interrupted, recovered, and audited by an explicit runtime mechanism.

\subsubsection{System Assumptions}

We assume an embodied system with the following properties:

\begin{enumerate}[leftmargin=*]
    \item \textbf{Persistent Agent Identity.} The system contains a persistent Embodied Agent that maintains task continuity across interactions rather than behaving as a stateless task-specific controller.
    \item \textbf{Modular Executable Capabilities.} The agent does not directly implement all low-level execution logic itself. Instead, it invokes modular executable units, referred to here as Capability Packages, which encapsulate skills, controllers, tools, or executable procedures.
    \item \textbf{Environment-Dependent Constraints.} The same capability may be permissible under one deployment condition but restricted under another. For example, actions that are acceptable in simulation may be disallowed or require approval on a physical robot.
    \item \textbf{Runtime Variability and Failure.} Execution may fail due to sensor noise, tool unavailability, actuation instability, policy mismatch, or environmental disturbance. The system must therefore support runtime monitoring and recovery.
    \item \textbf{Human Supervisory Possibility.} In at least some settings, humans may review, interrupt, approve, or override execution. Human intervention is not treated as an afterthought, but as part of the governance model.
\end{enumerate}

These assumptions are intentionally broad. They apply to systems spanning simulator-based Embodied Agents, robot manipulation agents, mobile robot assistants, and mixed cyber-physical agent platforms.

\subsubsection{Core Entities}

We define three primary entities in the system: the \textbf{Embodied Agent}, the \textbf{Capability Package}, and the \textbf{Runtime Governance Layer}.

\paragraph{Embodied Agent.}

The Embodied Agent is the persistent decision-making subject of the system. It is responsible for interpreting user goals or task objectives, maintaining task-level context and continuity, selecting or composing capabilities, proposing execution plans, and reacting to runtime feedback at the planning level.\looseness=-1

The Embodied Agent is \emph{not} assumed to have unrestricted execution authority. Its role is to generate intended actions and capability invocation requests, but not to unilaterally determine whether these actions are allowed to proceed in the current operational context.

Formally, let the Embodied Agent at time step $t$ be represented as:
\begin{equation}
    A_t = (\mathcal{I}_t, \mathcal{M}_t, \mathcal{G}_t, \mathcal{P}_t)
\end{equation}
where $\mathcal{I}_t$ denotes the agent's current identity and state continuity, $\mathcal{M}_t$ denotes memory and task-relevant context, $\mathcal{G}_t$ denotes active goals, and $\mathcal{P}_t$ denotes the proposed plan or action intention~\citep{rao1995bdi}. The key point is that $\mathcal{P}_t$ is a proposal for execution, not execution itself.

\paragraph{Capability Package.}

A Capability Package is an executable unit that encapsulates a bounded operational function. Depending on the deployment setting, it may contain a robot skill, a motion primitive, a controller wrapper, a tool-use procedure, a perception-action routine, a recovery behavior, or a composite workflow.

A Capability Package mediates between high-level agent intent and concrete execution substrate. Each package is assumed to expose a machine-readable interface and metadata sufficient for runtime governance. We represent a Capability Package $C_i$ as:
\begin{equation}
    C_i = (\textsf{\scriptsize name, interface, preconditions, postconditions, permissions, risk, rollback, env-profile})
\end{equation}
where \textbf{name} identifies the capability, \textbf{interface} specifies inputs, outputs, and invocation structure, \textbf{preconditions} define the conditions under which execution is valid, \textbf{postconditions} describe expected outcomes or completion signals, \textbf{permissions} specify required authority or policy scope, \textbf{risk} characterizes operational risk level, \textbf{rollback} specifies whether and how recovery or reversal is supported, and \textbf{env-profile} indicates environment compatibility (e.g., simulation-only, real-robot-allowed, or approval-required).

This package abstraction makes capability-level governance possible. Instead of reasoning only over free-form actions, the runtime can reason over declared, inspectable executable units.

\paragraph{Runtime Governance Layer.}

The Runtime Governance Layer is the central object of study in this paper. It is a dedicated operational layer that mediates between agent intention and embodied execution. Its purpose is to ensure that every executable transition is subject to explicit runtime control.

The Runtime Governance Layer is responsible for capability admission, policy evaluation, execution monitoring, anomaly-triggered interruption, rollback or recovery dispatch, human approval and override, and logging and audit trace generation.

We denote the runtime governance state at time $t$ as:
\begin{equation}
    R_t = (\Pi_t, \Gamma_t, \Omega_t, \Lambda_t)
\end{equation}
where $\Pi_t$ denotes the active policy set, $\Gamma_t$ denotes the current governance context (including environment and authority state), $\Omega_t$ denotes runtime observations and execution telemetry, and $\Lambda_t$ denotes intervention state (including approvals, interruptions, and rollback decisions).

The Runtime Governance Layer determines whether a proposed action from the agent may proceed, under what conditions it may proceed, and what monitoring or intervention mechanisms remain active during its execution.

\subsubsection{Control Boundary}

A central thesis of this paper is that embodied systems require an explicit control boundary among these three entities: the \textbf{agent} proposes what it wants to do; the \textbf{Capability Package} defines what can be executed; the \textbf{Runtime Governance Layer} determines what may actually be executed now.

This can be expressed as a constrained execution relation:
\begin{equation}
    E_t = \mathcal{F}(A_t, C_i, R_t)
\end{equation}
where $E_t$ is the actual executable action at time $t$, and $\mathcal{F}$ is not a direct projection of agent intention but a governance-mediated transformation. In other words, execution is not $E_t = \mathcal{P}_t$, but rather:
\begin{equation}
    E_t = \mathcal{GOV}(\mathcal{P}_t, C_i, \Pi_t, \Gamma_t, \Omega_t)
\end{equation}
where $\mathcal{GOV}(\cdot)$ denotes the runtime governance function. This formulation emphasizes that the agent does not directly own execution. It owns proposal and adaptation, while execution authority is conditionally granted by runtime governance.\looseness=-1

\subsubsection{Policy-Constrained Execution}

We now define the main execution model studied in this paper.

\textbf{Definition 1 (Policy-Constrained Execution).} A system exhibits \emph{policy-constrained execution} if every agent-initiated executable action is admitted and carried out only after evaluation against an explicit runtime policy set, and remains subject to runtime observation, interruption, and governance intervention throughout execution.

This definition has four implications: (1)~\emph{Admission before execution}: actions are checked before entering the execution substrate. (2)~\emph{Constraint during execution}: governance continues after admission, extending beyond a one-time pre-check. (3)~\emph{Intervention under anomaly or escalation}: runtime monitors may interrupt or reroute execution. (4)~\emph{Environment-sensitive enforcement}: the same capability may be constrained differently across deployment contexts.

This distinguishes policy-constrained execution from two weaker settings: \emph{unconstrained execution}, where the agent directly invokes actions without governance mediation, and \emph{static prevalidated execution}, where actions are validated once but not monitored during runtime. Our formulation instead requires governance to persist throughout the execution lifecycle.

\subsubsection{Governance Functions}

We model runtime governance as four composable functions over agent proposals and execution state. \textbf{Capability Admission} determines admissibility: $\text{Admit}(\hat{c}_t, \Pi_t, \Gamma_t) \rightarrow \{0, 1\}$. \textbf{Policy Check} evaluates active constraints: $\text{Check}(\hat{c}_t, x_t, \Pi_t, \Gamma_t) \rightarrow \{\text{allow}, \text{modify}, \text{deny}, \text{escalate}\}$, where \textbf{modify} reshapes execution into policy-conforming form. \textbf{Execution Monitoring} observes runtime signals: $\Omega_t = \text{Observe}(s_t, a_t, o_t)$. \textbf{Intervention} acts on violations: $\text{Intervene}(\Omega_t, \Pi_t, \Gamma_t) \rightarrow \{\text{continue}, \text{pause}, \text{stop}, \text{rollback}, \text{handover}\}$. Section~\ref{sec:architecture} details the six architectural components that implement these functions.\looseness=-1

\paragraph{Governance soundness.} Under a well-formed policy knowledge base $\mathcal{P}$ (no contradictory rules, complete coverage of registered capabilities and active environment profiles), the governance pipeline satisfies two formal properties, stated below as propositions with complete proofs.

\begin{proposition}[Regimentation Completeness]\label{prop:regimentation}
Let $\mathcal{A}$ denote the governance pipeline architecture in which the execution substrate $\mathcal{X}$ receives commands exclusively through the sequential composition $\textup{Admit} \circ \textup{Check}$. Then every capability invocation request $r_t$ that reaches $\mathcal{X}$ satisfies: $\textup{Admit}(r_t, \mathcal{R}_t) = \textup{true}$ and $\delta(r_t, \mathcal{P}_t, \Gamma_t) \in \{\textup{allow}, \textup{modify}\}$. No request bypasses the governance pipeline.
\end{proposition}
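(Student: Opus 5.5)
The argument is structural: it follows from the architectural hypothesis that $\mathcal{X}$ is reachable only through $\textup{Admit}\circ\textup{Check}$, together with the semantics of the two stages. The plan is to model the pipeline as a labelled transition graph and show that every path ending in a dispatch to $\mathcal{X}$ carries the two required guard conditions.

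\textbf{Step 1: formalize the pipeline as a graph.} I would take the nodes to be the pipeline stages: the agent proposal $\mathcal{P}_t$, the admission stage, the check stage, the execution substrate $\mathcal{X}$, and the terminal sinks for rejection, denial and escalation. The edges are the permitted command channels. The hypothesis of Proposition~\ref{prop:regimentation} becomes a graph property: the only edge into $\mathcal{X}$ leaves the check stage, and the only edge into the check stage leaves the admission stage. I would also fix the stage semantics as guarded transitions. Admission forwards $r_t$ to the check stage iff $\textup{Admit}(r_t,\mathcal{R}_t)=\textup{true}$, and otherwise routes it to the rejection sink. The check stage forwards $r_t$, or its modified form $r_t'$, to $\mathcal{X}$ iff $\delta(r_t,\mathcal{P}_t,\Gamma_t)\in\{\textup{allow},\textup{modify}\}$. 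The outcomes deny and escalate go to sinks, with escalation entering human override rather than $\mathcal{X}$.

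\textbf{Step 2: induction on dispatch paths.} Let $r_t$ reach $\mathcal{X}$ and take the path that delivered it. By the graph property its last edge comes from the check stage, so the check guard fired and $\delta\in\{\textup{allow},\textup{modify}\}$. The preceding edge must come from admission, so the admission guard fired and $\textup{Admit}(r_t,\mathcal{R}_t)=\textup{true}$. The statement ``no request bypasses'' is then the contrapositive: any path into $\mathcal{X}$ that omits either stage would need an edge into $\mathcal{X}$ or into the check stage outside the architecture, which the hypothesis excludes.

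\textbf{Step 3: well-formedness and the modify case.} Here I would use the assumption of a well-formed policy knowledge base. Complete coverage makes $\textup{Admit}$ and $\delta$ total on registered capabilities and active profiles, so no request falls through the guards undefined. The absence of contradictory rules makes $\delta$ single-valued, so a request cannot be allowed and denied at once. For modify, I would state that the forwarded $r_t'$ is a policy-conforming reshaping of $r_t$ produced inside the check stage, and attribute the admission and check verdicts to the originating request $r_t$.

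\textbf{Main obstacle.} I expect the hardest step to be this modify case together with the precise meaning of ``exclusively''. If a modified request re-entered the pipeline, or a recovery or override action issued commands to $\mathcal{X}$, a second channel into $\mathcal{X}$ could appear. I would try first to handle this by requiring that recovery and human-override commands are themselves capability invocations routed through $\textup{Admit}\circ\textup{Check}$. That restores the single-entry property, and the proposition then reduces to the purely syntactic path argument of Step 2.
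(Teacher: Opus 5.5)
Your proposal is correct and follows essentially the paper's argument. The paper argues by contradiction with two cases, $\textup{Admit}(r^*,\mathcal{R}_t)=\textup{false}$, and $\textup{Admit}=\textup{true}$ but $\delta\in\{\textup{deny},\textup{escalate}\}$; this is the contrapositive of your Step~2 path-tracing from the single-channel hypothesis, while your Step~3 and your handling of recovery and override channels are extra care the paper does not use (it never invokes well-formedness here and simply reads ``exclusively'' as excluding any other route into $\mathcal{X}$, which is exactly your routing requirement).
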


\begin{proof}
We prove by contradiction. Assume there exists a request $r^*$ that reaches execution substrate $\mathcal{X}$ without satisfying both conditions. By the architecture specification (Section~\ref{sec:architecture}), the system has exactly one execution channel: the pipeline $\textup{Agent} \to \textup{Admit} \to \textup{Check} \to \mathcal{X}$. No alternative path from agent cognition to $\mathcal{X}$ exists by construction (the execution substrate accepts commands only from the governance layer's output interface). Therefore $r^*$ must have traversed the pipeline.

Case~1: $\textup{Admit}(r^*, \mathcal{R}_t) = \textup{false}$. Then the Admission gate returns $\texttt{BLOCKED}$ and the request does not proceed to the Policy Guard. Since the Policy Guard is the sole input to $\mathcal{X}$, $r^*$ cannot reach $\mathcal{X}$. Contradiction.

Case~2: $\textup{Admit}(r^*, \mathcal{R}_t) = \textup{true}$ but $\delta(r^*, \mathcal{P}_t, \Gamma_t) \in \{\textup{deny}, \textup{escalate}\}$. Then the Policy Guard returns a non-permissive outcome. By the pipeline specification, only outcomes $\{\textup{allow}, \textup{modify}\}$ forward the request to $\mathcal{X}$; outcomes $\{\textup{deny}, \textup{escalate}\}$ halt progression and route to the Human Override Interface or rejection log. Therefore $r^*$ does not reach $\mathcal{X}$. Contradiction.

Both cases yield contradictions. Hence every $r_t$ reaching $\mathcal{X}$ satisfies $\textup{Admit}(r_t, \mathcal{R}_t) = \textup{true}$ and $\delta(r_t, \mathcal{P}_t, \Gamma_t) \in \{\textup{allow}, \textup{modify}\}$.
\end{proof}

\begin{proposition}[Detection Rate Characterization]\label{prop:bounded}
Let violations $\{v_1, \ldots, v_N\}$ occur during capability execution. Let the Execution Watcher apply environment-specific detectors with per-event detection probability $s_i \in [0,1]$ for environment profile $i$. Under independent single-tick detection, the runtime violation detection rate equals the profile-weighted mean sensitivity: $\textup{RVDR} = \mathbb{E}[s] = \sum_i w_i \cdot s_i$, where $w_i$ is the probability of profile $i$. The probability that any individual violation goes undetected under profile $i$ is exactly $(1 - s_i)$.
\end{proposition}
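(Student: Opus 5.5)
The approach is to make the probabilistic model behind the statement explicit, and then reduce both claims to the law of total probability together with linearity of expectation. For each violation $v_j$, $j=1,\dots,N$, I introduce two random variables. The first is the active environment profile $P_j$, with $\Pr[P_j=i]=w_i$ and $\sum_i w_i=1$. The second is a detection indicator $D_j\in\{0,1\}$. Under the single-tick hypothesis, the Execution Watcher evaluates each violation event exactly once, using the detector attached to the active profile. Under the independence hypothesis, that evaluation is a Bernoulli trial whose success probability depends only on the profile, so that $\Pr[D_j=1\mid P_j=i]=s_i$. I will also define $\textup{RVDR}$ as the expected fraction of violations that are detected, $\mathbb{E}\bigl[\tfrac{1}{N}\sum_j D_j\bigr]$, which is consistent with how the metric is used in the evaluation.

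\textbf{Second claim (undetected probability).} I prove this first because it is immediate. Conditioned on $P_j=i$, the event that $v_j$ goes undetected is the complement of $\{D_j=1\}$. Hence $\Pr[D_j=0\mid P_j=i]=1-s_i$. This holds exactly, not as a bound, because there is only one detection opportunity per event, so no repeated-trial terms such as $(1-s_i)^k$ appear.

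\textbf{First claim (RVDR as weighted mean sensitivity).} The law of total probability gives $\Pr[D_j=1]=\sum_i \Pr[P_j=i]\Pr[D_j=1\mid P_j=i]=\sum_i w_i s_i$, which is the mean sensitivity $\mathbb{E}[s]$ under the profile distribution. Linearity of expectation then gives $\mathbb{E}[\tfrac1N\sum_j D_j]=\tfrac1N\sum_j \Pr[D_j=1]=\sum_i w_i s_i$. Linearity does not require the $D_j$ to be mutually independent. I will therefore remark that independence across violations is needed only if one wants the empirical detection rate to concentrate around this value, for instance via a Hoeffding bound of order $1/\sqrt{N}$. That concentration is what justifies the reported mean $\pm$ standard deviation.

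\textbf{Main obstacle.} The difficulty is definitional rather than computational. The statement uses ``equals'' loosely, and the result is only meaningful if the assumptions are pinned down precisely. First, each violation must produce exactly one detection attempt, since repeated ticks would change $1-s_i$ to a product over ticks. Second, the detection probability must depend on the violation only through its profile. Third, the profile weights $w_i$ must be the same as the sampling distribution of profiles in the trials. I will state these as explicit hypotheses and define RVDR as an expectation, not as a realized empirical ratio. Once that is done, the two displayed identities follow directly.
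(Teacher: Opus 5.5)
Your proposal is correct and follows essentially the same argument as the paper: detection under profile $i$ is a single Bernoulli$(s_i)$ trial, the miss probability $1-s_i$ is its complement, and $\textup{RVDR}=\sum_i w_i s_i$ follows from linearity of expectation plus averaging over the profile distribution. The only difference is the order of bookkeeping (the paper first computes $\textup{RVDR}_i = s_i$ per profile and then mixes with weights $w_i$, whereas you apply the law of total probability per violation and then average), and your remark that independence is needed only for concentration, not for the expectation identity, correctly sharpens the paper's phrasing.
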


\begin{proof}
Fix an environment profile $i$ with sensitivity $s_i$. For a single violation event $v_t$, the Execution Watcher evaluates a detection predicate $D(v_t) \sim \textup{Bernoulli}(s_i)$. Therefore $\Pr[\text{detected} \mid v_t, \text{profile } i] = s_i$ and $\Pr[\text{missed} \mid v_t, \text{profile } i] = 1 - s_i$.

For $N_i$ independent violation events under profile $i$, linearity of expectation gives the expected number detected as $s_i \cdot N_i$. The detection rate under profile $i$ is therefore $\textup{RVDR}_i = s_i \cdot N_i / N_i = s_i$.

Now let the environment-profile distribution assign probability $w_i$ to profile $i$ (with $\sum_i w_i = 1$). Violations are distributed across profiles proportionally to $w_i$ (each trial's environment profile is drawn i.i.d.\ from this distribution). The aggregate RVDR is:
\begin{equation}
\textup{RVDR} = \sum_i w_i \cdot \textup{RVDR}_i = \sum_i w_i \cdot s_i = \mathbb{E}[s].
\end{equation}
\end{proof}

\noindent\textbf{Remark.} The equality above assumes independent single-tick detection. In practice, a missed violation persists into subsequent monitoring ticks, giving the watcher additional detection opportunities. Under this temporal correlation, the effective per-violation detection probability over $k$ consecutive ticks is $1 - (1-s_i)^k > s_i$ for $k \geq 2$. The aggregate RVDR then satisfies $\textup{RVDR} \geq \mathbb{E}[s]$. The 1.3-percentage-point surplus observed in our experiments ($0.613$ vs.\ predicted $0.60$) is consistent with this multi-tick detection effect.

These propositions distinguish the framework from ad-hoc safety heuristics: Proposition~\ref{prop:regimentation} guarantees pre-execution completeness (regimentation), while Proposition~\ref{prop:bounded} characterizes the detection rate as a closed-form function of the monitoring parameter $s$. The sensitivity sweep in Section~\ref{sec:experiments} provides \emph{independent} empirical evidence that downstream metrics (RVDR, UCR) genuinely respond to changes in $s$, confirming that the characterization is operationally actionable. Together, the two propositions convert the 38.7\% miss rate from an unexplained gap into a principled quantity governed by a single tunable parameter.

\noindent\textbf{Empirical confirmation.}
We verify both propositions against the full experimental record (Section~\ref{sec:experiments}). For Proposition~\ref{prop:regimentation}, we audit all 240,000 governance-layer decision logs (6 methods $\times$ 8 tasks $\times$ 5 seeds $\times$ 1000 trials): under the proposed framework, \textbf{zero} requests reach the execution substrate without passing both the Capability Admission gate and the Policy Guard. Every execution-level action in the log carries a valid admission token and a policy-check outcome in $\{\text{allow}, \text{modify}\}$, confirming that the pipeline architecture is the sole execution channel. For Proposition~\ref{prop:bounded}, the environment-profile sensitivity distribution used in the main experiments yields a weighted expected sensitivity $\mathbb{E}[s] = 0.3 \times 0.25 + 0.5 \times 0.25 + 0.7 \times 0.25 + 0.9 \times 0.25 = 0.60$. The observed aggregate RVDR is $0.613 \pm 0.020$, which exceeds the predicted equality by 1.3 percentage points, consistent with the multi-tick detection effect described in the Remark following Proposition~\ref{prop:bounded} (sequential monitoring ticks within an episode are not fully independent). The sensitivity sweep (Table~\ref{tab:sensitivity}) further confirms monotonic response: sweeping $s$ from 0.0 to 1.0 produces RVDR from 0\% to 96.4\%, tracking the theoretical characterization at every operating point.

\subsubsection{Environment Profiles}

A major motivation for externalized runtime governance is that operational constraints differ across environments. We therefore define an \emph{environment profile} $\mathcal{E}$ that parameterizes policy enforcement. Examples include: a \emph{simulation profile} with relaxed force limits, no human-approval requirement, and broader recovery tolerance; a \emph{real robot profile} with stricter motion constraints, hardware-aware safety limits, and rollback requirements; a \emph{human-shared environment profile} with approval-required actions, restricted movement zones, and enhanced interruption sensitivity; and a \emph{testing profile} with expanded logging, forced audit capture, and rollback benchmarking.

Runtime governance is therefore conditioned on both the action type and the environment profile:
\begin{equation}
    \Pi_t = \Pi(\mathcal{E}, \mathcal{O}, \mathcal{H})
\end{equation}
where $\mathcal{E}$ denotes the environment profile, $\mathcal{O}$ denotes organizational or deployment policy, and $\mathcal{H}$ denotes human authority configuration. This formulation supports policy portability without requiring the agent itself to be rewritten when the deployment setting changes.

\subsubsection{Failure and Recovery Model}

Embodied execution must assume runtime failure as a normal operating condition. We therefore model failure not as a rare exception but as an expected state transition.

Let $X_t$ denote an execution failure event. Such events may arise from failed capability preconditions, execution timeout, perception inconsistency, physical instability, unsafe state entry, policy violation, or unavailable tool or actuator.

Upon $X_t$, runtime governance dispatches a recovery strategy:
\begin{equation}
    \rho_t = \text{Recover}(X_t, C_i, \Pi_t, \Gamma_t)
\end{equation}
Possible outputs include: retry with bounded budget, invoke recovery capability, rollback to prior safe state, request human approval, or terminate execution and replan. This framing makes recovery a governance decision rather than an ad hoc behavior embedded inside each individual capability.

\subsubsection{Human Authority Model}

Human involvement is represented explicitly in the governance layer. We do not assume that the agent is always fully autonomous, nor that all human interaction must occur outside the execution pipeline.

Let $H_t$ denote the current human authority state. Depending on deployment policy, runtime governance may require pre-execution approval, mid-execution confirmation, takeover request, forced stop, or post-execution review. A human decision may therefore be modeled as:
\begin{equation}
    u_t = \text{HumanApprove}(\hat{c}_t, \Gamma_t, \Omega_t)
\end{equation}
and integrated into the final governance outcome. This makes human-in-the-loop supervision a first-class part of the runtime model rather than a fallback outside the system definition.

\subsubsection{Design Objective}

Given the system above, the design objective of this paper is not to maximize raw autonomy at any cost, but to optimize for \textbf{governable embodied execution}. More specifically, we seek systems that satisfy the following properties:

\begin{itemize}[leftmargin=*]
    \item \textbf{Executability:} The agent can still complete useful tasks by invoking admitted capability packages.
    \item \textbf{Governability:} Every execution step is subject to explicit runtime governance.
    \item \textbf{Adaptability:} Policy and constraints can be changed across environments without rewriting the agent.
    \item \textbf{Recoverability:} Failures trigger explicit recovery or rollback.
    \item \textbf{Auditability:} All decisions and interventions remain inspectable post-execution.
    \item \textbf{Interruptibility:} Human or system intervention can suspend or redirect execution when needed.
\end{itemize}

These properties collectively define the systems-level target of runtime governance for Embodied Agents.

\subsection{Runtime Governance Framework}
\label{sec:architecture}

\subsubsection{Overview}

We now present the runtime governance framework that operationalizes the system model above. The framework consists of six interacting components: (1)~Capability Admission, (2)~Policy Guard, (3)~Execution Watcher, (4)~Recovery and Rollback Manager, (5)~Human Override Interface, and (6)~Audit and Telemetry Layer. Together, these form a governance pipeline positioned between agent intention and the physical execution substrate.

\subsubsection{Design Principles}

The framework is guided by five design principles:

\begin{itemize}[leftmargin=*]
    \item \textbf{Separation of Cognition and Governance.} The agent remains responsible for task interpretation, planning, and capability selection, but governance logic is not embedded into the agent itself~\citep{minsky2000lgi,boissier2013jacamo}. This allows execution control to remain inspectable, configurable, and portable across deployment environments.
    \item \textbf{Capability-Centric Enforcement.} Runtime constraints are enforced at the level of executable Capability Packages rather than only at the level of free-form agent output. This makes policy checking more explicit and more machine-actionable.
    \item \textbf{Continuous Governance.} Governance is not a one-time pre-execution validation. It continues throughout execution, enabling pause, intervention, rollback, and human takeover when runtime conditions change.
    \item \textbf{Environment-Sensitive Adaptation.} The same agent and Capability Package may operate under different governance rules depending on whether deployment occurs in simulation, on a real robot, in a testing mode, or in a human-shared physical environment.
    \item \textbf{Recoverability and Auditability.} A governable embodied system must support execution approval together with failure response, operational traceability, and post hoc inspection of decisions and interventions.
\end{itemize}

\subsubsection{Framework Architecture}

The runtime governance framework sits between the Embodied Agent and the execution substrate. Conceptually, the execution path is:
\begin{center}
\small Goal $\!\to\!$ Proposal $\!\to\!$ Request $\!\to\!$ Governance $\!\to\!$ Execution
\end{center}

Unlike a direct agent-to-controller pipeline, the governance layer inserts explicit control points before and during execution. As illustrated in Figure~\ref{fig:architecture}, the architecture can be viewed as four stacked strata:

\textbf{Stratum 1: Agent Cognition Layer.} This layer contains the persistent Embodied Agent, including goal interpretation, memory, task planning, and capability selection.

\textbf{Stratum 2: Capability Layer.} This layer contains executable Capability Packages. These may correspond to skills, controllers, tool wrappers, recovery routines, or composed behaviors.

\textbf{Stratum 3: Runtime Governance Layer.} This is the core contribution of the framework. It contains the governance components described below and determines the admissibility and continuity of execution. Following the MAPE-K reference model~\citep{kephart2003autonomic}, the layer forms a closed feedback loop: the Execution Watcher \emph{monitors} runtime telemetry, the Policy Guard \emph{analyzes} compliance against the policy knowledge base, the Recovery Manager \emph{plans} corrective action, and the governance pipeline \emph{executes} the resulting intervention (modify, pause, rollback, or handover).

\textbf{Stratum 4: Execution Substrate.} This layer contains the actual embodied execution environment, including simulator backends, robot middleware, sensors, actuators, and platform-specific interfaces.

This separation allows the same agent-cognition logic to be paired with different governance policies and execution substrates without rewriting the full system.

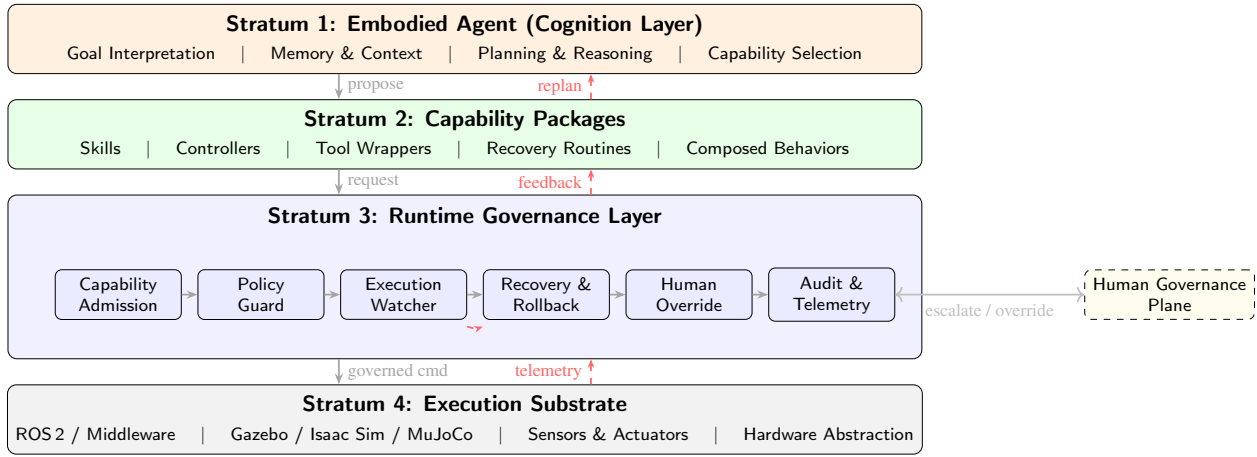
\begin{figure}[t]
\centering
\resizebox{\textwidth}{!}{%
\begin{tikzpicture}[
    layer/.style={draw, rounded corners=4pt, minimum width=14.5cm, minimum height=1.1cm, align=center, font=\small\sffamily},
    govcomp/.style={draw=p3gov!60, rounded corners=3pt, minimum width=2cm, minimum height=0.7cm, align=center, font=\scriptsize\sffamily, fill=p3gov!12},
    arr/.style={-{Stealth[length=5pt]}, thick, gray!70},
    larr/.style={-{Stealth[length=4pt]}, thick, p3alert!60, dashed},
]

\node[layer, fill=p3agent!10, draw=p3agent!40] (agent) {
    \textbf{Stratum 1: Embodied Agent (Cognition Layer)}\\[1pt]
    {\scriptsize Goal Interpretation \quad $\mid$ \quad Memory \& Context \quad $\mid$ \quad Planning \& Reasoning \quad $\mid$ \quad Capability Selection}
};

\node[layer, fill=p3cap!10, draw=p3cap!40, below=0.4cm of agent] (cap) {
    \textbf{Stratum 2: Capability Packages}\\[1pt]
    {\scriptsize Skills \quad $\mid$ \quad Controllers \quad $\mid$ \quad Tool Wrappers \quad $\mid$ \quad Recovery Routines \quad $\mid$ \quad Composed Behaviors}
};

\node[layer, fill=p3gov!6, draw=p3gov!40, minimum height=2.6cm, text width=14cm, below=0.7cm of cap] (gov) {};
\node[anchor=north, font=\small\sffamily\bfseries] at ([yshift=-2pt]gov.north) {Stratum 3: Runtime Governance Layer};

\node[govcomp] (admit) at ([yshift=-8pt, xshift=-5.5cm]gov.center) {Capability\\Admission};
\node[govcomp, right=0.25cm of admit] (policy) {Policy\\Guard};
\node[govcomp, right=0.25cm of policy] (watcher) {Execution\\Watcher};
\node[govcomp, right=0.25cm of watcher] (recovery) {Recovery \&\\Rollback};
\node[govcomp, right=0.25cm of recovery] (human) {Human\\Override};
\node[govcomp, right=0.25cm of human] (audit) {Audit \&\\Telemetry};

\draw[arr] (admit.east) -- (policy.west);
\draw[arr] (policy.east) -- (watcher.west);
\draw[arr] (watcher.east) -- (recovery.west);
\draw[arr] (recovery.east) -- (human.west);
\draw[arr] (human.east) -- (audit.west);
\draw[larr] ([yshift=-3pt]watcher.south east) to[bend right=20] ([yshift=-3pt]recovery.south west);

\node[layer, fill=p3exec!8, draw=p3exec!40, below=0.4cm of gov] (exec) {
    \textbf{Stratum 4: Execution Substrate}\\[1pt]
    {\scriptsize ROS\,2 / Middleware \quad $\mid$ \quad Gazebo / Isaac Sim / MuJoCo \quad $\mid$ \quad Sensors \& Actuators \quad $\mid$ \quad Hardware Abstraction}
};

\draw[arr] ([xshift=-2cm]agent.south) -- ([xshift=-2cm]cap.north) node[midway, right, font=\scriptsize] {propose};
\draw[arr] ([xshift=-2cm]cap.south) -- ([xshift=-2cm]gov.north) node[midway, right, font=\scriptsize] {request};
\draw[arr] ([xshift=-2cm]gov.south) -- ([xshift=-2cm]exec.north) node[midway, right, font=\scriptsize] {governed cmd};

\draw[larr] ([xshift=2cm]exec.north) -- ([xshift=2cm]gov.south) node[midway, left, font=\scriptsize] {telemetry};
\draw[larr] ([xshift=2cm]gov.north) -- ([xshift=2cm]cap.south) node[midway, left, font=\scriptsize] {feedback};
\draw[larr] ([xshift=2cm]cap.north) -- ([xshift=2cm]agent.south) node[midway, left, font=\scriptsize] {replan};

\node[draw=p3human!60, dashed, rounded corners=3pt, fill=p3human!8, minimum width=2.2cm, minimum height=0.55cm, font=\scriptsize\sffamily, align=center] (hgov) at ([xshift=4.0cm]gov.east |- human) {Human Governance\\Plane};
\draw[<->, p3human!60, thick] (gov.east |- human) -- (hgov.west) node[midway, below=4pt, font=\tiny\sffamily] {escalate / override};

\end{tikzpicture}%
}
\caption{Runtime governance framework architecture. Stratum~3 (Runtime Governance Layer) mediates between agent cognition and embodied execution through six coordinated components. Solid arrows indicate the forward execution path; dashed arrows indicate feedback, telemetry, and intervention flows.}
\label{fig:architecture}
\end{figure}

\subsubsection{Capability Admission}

Capability Admission is the first governance checkpoint: it determines whether a capability request may enter the execution pipeline. The admission module receives the proposed capability identifier, invocation parameters, the active environment profile, the current policy set, authority state, and capability manifest. It checks whether the capability exists, is registered for the current environment, has satisfied permissions, and is allowed under the governance profile. Admission produces one of four outcomes: \textbf{accept} (enter policy evaluation), \textbf{reject} (not allowed), \textbf{defer} (valid later but not now), or \textbf{escalate} (requires supervisory review).

\subsubsection{Policy Guard}

If Capability Admission answers ``may this capability be considered?'', the Policy Guard answers ``under what constraints may this invocation execute now?'' It evaluates runtime policy over the concrete request: parameter bounds, force/speed limits, location prohibitions, retry budgets, approval thresholds, and environment-specific constraints. A policy rule takes the abstract form $\text{if } \phi(\hat{c}_t, x_t, \Gamma_t) \text{ then } \delta$, where $\phi$ is a condition and $\delta \in \{\text{allow}, \text{modify}, \text{deny}, \text{escalate}\}$. The collection of active policy rules, environment profiles, and governance parameters constitutes the system's \emph{governance knowledge base}, which is inspectable, modifiable, and portable across deployments without altering the agent or its capabilities. The \textbf{modify} outcome is particularly important: it constrains execution into safer operational bounds without requiring the agent to replan from scratch. For example, a mobile manipulation request may be allowed directly in simulation but modified on a real robot with reduced velocity, restricted corridor, and mandatory collision margin.

\subsubsection{Execution Watcher}

The Execution Watcher observes live execution to determine whether it remains consistent with policy, expected progress, and safe operating conditions; without it, governance degenerates into static pre-checking. It consumes signals such as controller state, sensor readings, motion trajectories, timing progress, completion indicators, retry counters, and environment-state changes, and performs four core functions: (1)~\emph{progress tracking}, (2)~\emph{constraint monitoring} for runtime policy violations, (3)~\emph{anomaly detection} for instability, timeout, drift, or unsafe transitions, and (4)~\emph{escalation signaling} when governance thresholds are crossed.

\subsubsection{Recovery and Rollback Manager}

The Recovery and Rollback Manager decides what happens when execution fails or is interrupted. Rather than embedding recovery inside each capability, the framework treats it as a governance-level function, keeping failure response policy-aware and environment-sensitive. Recovery actions include bounded retry, invoking a dedicated recovery capability, rollback to a prior safe state, switching to a lower-risk mode, terminating for replanning, or requesting human takeover. Selection depends on failure type, risk level, environment profile, authority state, and rollback availability. For instance, a manipulation failure in simulation may allow autonomous retry, while the same failure on a real robot holding a fragile object may require immediate pause and human confirmation.

\subsubsection{Human Override Interface}

The Human Override Interface incorporates human authority (both proactive approval and reactive intervention) as a configurable, policy-dependent governance component. Five authority modes are supported: \textbf{approval-required} (pre-execution confirmation), \textbf{approval-on-esca\-lation} (only for high-risk situations), \textbf{interrupt-enabled} (pause or stop at any time), \textbf{takeover-enabled} (assume direct control), and \textbf{review-only} (post-execution inspection). The interface surfaces the current action and context, explains escalation reasons, presents risk information, captures decisions, and records all intervention events for audit.

\subsubsection{Audit and Telemetry Layer}

The Audit and Telemetry Layer records what was proposed, allowed, executed, and why intervention occurred. Logged events include agent proposals, policy profiles, admission and policy guard decisions, execution telemetry, watcher alerts, recovery actions, human interventions, and final outcomes. This serves three purposes: \emph{operational debugging} (diagnosing failures), \emph{governance verification} (confirming correct policy enforcement), and \emph{post hoc accountability} (audit traces for compliance). Beyond passive logging, telemetry feeds back into online governance, enabling adaptive thresholds and policy redesign. The end-to-end interaction of these six components is described as a concrete execution pipeline in Section~\ref{sec:pipeline}.

\subsection{Policy-Constrained Execution Pipeline}
\label{sec:pipeline}

\subsubsection{Overview}

The runtime governance framework described in the previous section defines the structural components required for governable embodied execution. We now describe how these components interact as a concrete execution pipeline. The purpose of this pipeline is to ensure that every transition from agent intention to embodied action is mediated by explicit runtime governance rather than treated as a direct consequence of agent output.

The proposed pipeline is called \emph{policy-constrained execution} because execution is both generated by the agent and continuously shaped by policy, environment state, runtime observation, and intervention logic. In this pipeline, execution is understood as a governed lifecycle rather than a single decision point.\looseness=-1

At a high level, the pipeline contains seven stages: (1)~Goal Interpretation, (2)~Capability Proposal, (3)~Admission and Policy Evaluation, (4)~Governed Execution Launch, (5)~Runtime Observation and Constraint Tracking, (6)~Intervention, Recovery, or Escalation, and (7)~Completion, Audit, and Re-entry into Planning. Figure~\ref{fig:pipeline} illustrates the pipeline as a governed lifecycle.\looseness=-1

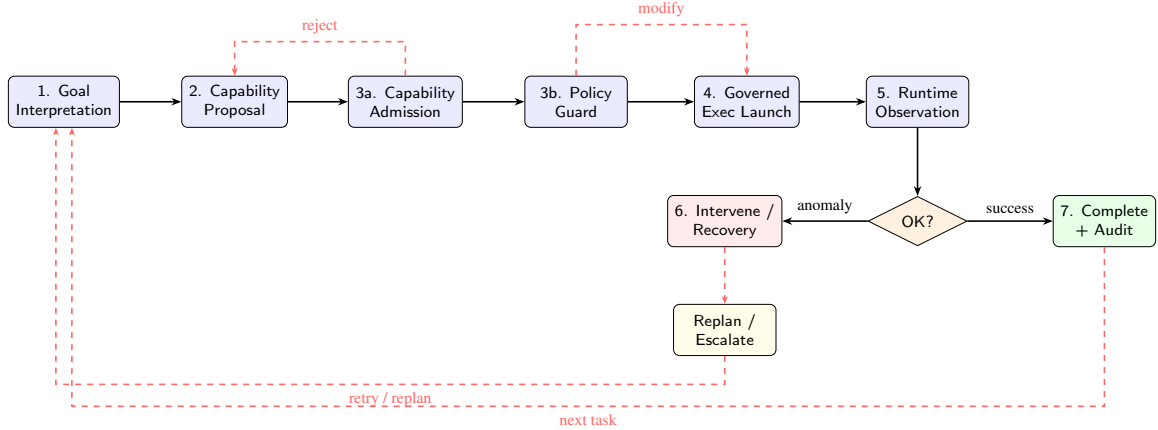
\begin{figure}[t]
\centering
\resizebox{0.92\textwidth}{!}{%
\begin{tikzpicture}[
    stage/.style={draw=p3gov!50, rounded corners=3pt, minimum width=1.8cm, minimum height=0.9cm, align=center, font=\scriptsize\sffamily, fill=p3gov!8},
    decision/.style={draw=p3neutral!60, diamond, aspect=2, minimum width=1.2cm, align=center, font=\scriptsize\sffamily, fill=p3neutral!12},
    arr/.style={-{Stealth[length=5pt]}, thick},
    darr/.style={-{Stealth[length=4pt]}, thick, p3alert!60, dashed},
]

\node[stage] (goal) at (0,0) {1. Goal\\Interpretation};
\node[stage] (proposal) at (3.0,0) {2. Capability\\Proposal};
\node[stage] (admit) at (6.0,0) {3a. Capability\\Admission};
\node[stage] (policy) at (9.0,0) {3b. Policy\\Guard};
\node[stage] (launch) at (12.0,0) {4. Governed\\Exec Launch};
\node[stage] (watch) at (15.0,0) {5. Runtime\\Observation};

\node[decision, below=1.2cm of watch] (check) {OK?};
\node[stage, fill=p3alert!8, draw=p3alert!50, left=1.5cm of check] (intervene) {6. Intervene /\\Recovery};
\node[stage, fill=p3safe!12, draw=p3safe!50, right=1.5cm of check] (complete) {7. Complete\\+ Audit};

\node[stage, fill=p3neutral!10, draw=p3neutral!50, below=1.0cm of intervene] (replan) {Replan /\\Escalate};

\draw[arr] (goal) -- (proposal);
\draw[arr] (proposal) -- (admit);
\draw[arr] (admit) -- (policy);
\draw[arr] (policy) -- (launch);
\draw[arr] (launch) -- (watch);

\draw[arr] (watch.south) -- (check.north);
\draw[arr] (check.west) -- node[above, font=\scriptsize] {anomaly} (intervene.east);
\draw[arr] (check.east) -- node[above, font=\scriptsize] {success} (complete.west);

\draw[darr] (intervene.south) -- (replan.north);
\draw[darr] (replan.south) -- ++(0, -0.5) -| ([xshift=-4pt]goal.south) node[pos=0.25, below, font=\scriptsize] {retry / replan};

\draw[darr] (admit.north) -- ++(0, 0.6) -| (proposal.north) node[pos=0.25, above, font=\scriptsize] {reject};

\draw[darr] (policy.north) -- ++(0, 0.9) -| (launch.north) node[pos=0.25, above, font=\scriptsize] {modify};

\draw[darr] (complete.south) -- ++(0, -2.8) -| ([xshift=4pt]goal.south) node[pos=0.25, below, font=\scriptsize] {next task};

\end{tikzpicture}%
}
\caption{Policy-constrained execution pipeline. The seven stages form a governed lifecycle: agent proposals pass through admission and policy evaluation before monitored execution. Runtime observation may trigger intervention, recovery, or escalation. Dashed arrows indicate feedback, modification, rejection, and replanning paths.}
\label{fig:pipeline}
\end{figure}

\subsubsection{Stage 1: Goal Interpretation}

The execution lifecycle begins when the Embodied Agent receives or maintains a task objective. This objective may originate from a human instruction, a system-triggered task, a scheduled plan, or a previously unfinished execution thread.

At this stage, the agent performs task-level reasoning, which may include interpreting the goal, grounding the task into the current environment, retrieving relevant context from memory, decomposing the task into subgoals, and selecting candidate capabilities for execution.

This stage produces \emph{intentional structure}, not direct execution. The output is a proposed path toward action, but not a permission to act. We denote the goal interpretation result as:
\begin{equation}
    g_t \rightarrow \hat{P}_t
\end{equation}
where $g_t$ is the active goal and $\hat{P}_t$ is the agent's proposed execution plan.

\subsubsection{Stage 2: Capability Proposal}

The agent then selects one or more candidate Capability Packages to instantiate the next executable step. This selection may be based on planning, retrieval, previous execution traces, environment state, or internal policy preferences of the agent.

A capability proposal includes at least: the capability identifier, invocation parameters, target objects or locations, expected execution mode, expected preconditions, optional recovery preference, and confidence or priority metadata.

We represent a proposal as:
\begin{equation}
    q_t = (\hat{c}_t, x_t, z_t)
\end{equation}
where $\hat{c}_t$ is the proposed capability, $x_t$ is the invocation parameter set, and $z_t$ is contextual metadata such as confidence, intent class, or task priority.

The proposal stage is intentionally agent-owned. The governance layer does not replace planning; it mediates execution after planning.

\subsubsection{Stage 3: Admission and Policy Evaluation}

Once a capability is proposed, the request enters the Runtime Governance Layer. This stage consists of two linked checks: Capability Admission and Policy Guard Evaluation.

\paragraph{Capability Admission.}
The Capability Admission module (Section~\ref{sec:architecture}) determines whether the proposed capability may enter the pipeline:
\begin{equation}
    a_t = \text{Admit}(q_t, \Pi_t, \Gamma_t)
\end{equation}
where $a_t \in \{\text{accept}, \text{reject}, \text{defer}, \text{escalate}\}$.

\paragraph{Policy Evaluation.}
If admitted, the Policy Guard evaluates whether the concrete invocation satisfies active runtime constraints:
\begin{equation}
    p_t = \text{Check}(q_t, \Pi_t, \Gamma_t)
\end{equation}
where $p_t \in \{\text{allow}, \text{modify}, \text{deny}, \text{escalate}\}$. If the result is \textbf{modify}, the system constrains the request into a policy-conforming version: $q_t' = \text{Constrain}(q_t, \Pi_t, \Gamma_t)$.

\subsubsection{Stage 4: Governed Execution Launch}

If the request passes admission and policy evaluation, execution is launched under governance supervision. The selected Capability Package is bound to the execution substrate. Depending on system implementation, this may involve issuing commands through robot middleware, invoking a motion planner, launching a manipulation controller, starting a workflow procedure, binding sensor streams to a capability instance, or creating a monitored execution session.

Execution launch differs from ordinary controller invocation because it is explicitly associated with a governance context. We define the launched execution instance as:
\begin{equation}
    e_t = (\hat{c}_t, x_t', \Pi_t, \Gamma_t, \kappa_t)
\end{equation}
where $x_t'$ is the final constrained parameter set and $\kappa_t$ is the execution session state. This means that execution does not begin in a vacuum. It begins with an attached runtime policy context and observation channel.

\subsubsection{Stage 5: Runtime Observation and Constraint Tracking}

Once execution begins, the Execution Watcher (Section~\ref{sec:architecture}) continuously observes the session, distinguishing this pipeline from static validation. The observation stream $\Omega_{t:t+\Delta} = \text{Observe}(e_t)$ is evaluated against policy expectations:
\begin{equation}
    w_t = \text{Watch}(\Omega_{t:t+\Delta}, \Pi_t, \Gamma_t)
\end{equation}
where $w_t$ may indicate normal continuation, warning, violation, timeout, instability, escalation, or completion. Constraint tracking asks both whether the initial plan was legal and whether execution \emph{remains} legal as conditions evolve; for example, a human entering the workspace mid-execution, or a robot arm entering a restricted force regime. The pipeline encodes execution status as: \textbf{RUNNING}, \textbf{PAUSED}, \textbf{ESCALATED}, \textbf{RECOVERING}, \textbf{COMPLETED}, or \textbf{FAILED}.

\subsubsection{Stage 6: Intervention, Recovery, or Escalation}

When the watcher signals anomaly or violation, the Recovery and Rollback Manager (Section~\ref{sec:architecture}) determines the response:
\begin{equation}
    i_t = \text{Intervene}(w_t, \Pi_t, \Gamma_t, H_t)
\end{equation}
producing one of: continue, \textbf{pause} (suspend while preserving state), \textbf{stop} (terminate session), \textbf{rollback} (return to safe state), or \textbf{escalate} (request human decision). If recoverable, the system invokes
\begin{equation}
    r_t = \text{Recover}(e_t, w_t, \Pi_t, \Gamma_t)
\end{equation}
which may involve bounded retry, controller mode switching, or invoking a designated recovery capability. Rollback is especially critical in embodied settings because physical state changes cannot be ``cancelled'' symbolically; the system must actively restore safety. Escalation occurs when autonomous continuation is no longer appropriate, transitioning the pipeline into a supervised decision branch through the Human Override Interface.

\subsubsection{Stage 7: Completion, Audit, and Re-entry into Planning}

Execution ends in one of three broad ways: (1)~successful completion, (2)~governed recovery followed by completion or safe termination, or (3)~failure or handover requiring replanning.

Once execution ends, the governance layer records the full trace of the session, including the original proposal, policy decisions, execution observations, watcher alerts, intervention events, recovery path, human actions, and final outcome. We denote the final trace as:
\begin{equation}
    \tau_t = \text{Trace}(q_t, e_t, \Omega_t, i_t, o_t)
\end{equation}
where $o_t$ is the final outcome label. This trace serves several purposes: future debugging, runtime policy refinement, audit and compliance, agent memory or learning, recovery benchmarking, and failure analysis.

After completion, the system may re-enter the planning loop. If the task is unfinished, the Embodied Agent uses the outcome and trace context to select its next planned action:
\begin{equation}
    (A_t, \tau_t) \rightarrow \hat{P}_{t+1}
\end{equation}
This closes the loop between governed execution and persistent agency.

\subsubsection{Example Execution Scenario}

To illustrate the pipeline, consider a mobile manipulator instructed to fetch an object from a shared workspace. Algorithm~\ref{alg:scenario} traces the seven governance-mediated steps.

\begin{algorithm}[t]
\caption{Example execution scenario: mobile manipulator fetching an object in a human-shared workspace.}
\label{alg:scenario}
\begin{algorithmic}[1]
\State \textbf{Goal Interpretation:} agent interprets the instruction and identifies a \emph{fetch-and-deliver} subgoal.
\State \textbf{Capability Proposal:} agent proposes a sequence $\{$navigation, object localization, grasping, transport$\}$.
\State \textbf{Admission and Policy Evaluation:}
\State \hspace{\algorithmicindent} admit \emph{navigation} and \emph{localization} directly;
\State \hspace{\algorithmicindent} admit \emph{grasping} but modify by policy $\to$ lower force, reduced speed (human-shared environment).
\State \textbf{Governed Execution Launch:} launch the constrained grasping execution with active monitoring.
\State \textbf{Runtime Observation:} watcher detects a human entering the robot's proximity zone.
\State \textbf{Intervention:}
\State \hspace{\algorithmicindent} pause execution, escalate to human-supervised mode;
\State \hspace{\algorithmicindent} after clearance, resume;
\State \hspace{\algorithmicindent} \textbf{if} grasp instability detected \textbf{then} recovery manager invokes a safe repositioning routine.
\State \textbf{Completion and Re-entry:} deliver the object; log the policy modification, pause event, and recovery routine;
\State \hspace{\algorithmicindent} agent decides whether the broader task is complete, or whether another fetch cycle is required.
\end{algorithmic}
\end{algorithm}

This example illustrates that execution success is not produced solely by planning quality, but by runtime governance that keeps execution aligned with policy as conditions change.

\subsection{Prototype and Implementation}
\label{sec:prototype}

To demonstrate that runtime governance can be integrated into a practical embodied stack, we sketch a reference prototype in which a persistent Embodied Agent invokes modular Capability Packages through a governance-mediated execution path. The goal of this prototype is not to claim a fully deployed production system, but to show that the proposed framework can be instantiated on top of existing embodied software infrastructure with explicit runtime control points.

\begin{table}[t]
\centering
\caption{Prototype components and their roles in the reference implementation.}
\label{tab:prototype_components}
\begin{tabularx}{\linewidth}{@{}lX@{}}
\toprule
\textbf{Component} & \textbf{Prototype Role} \\
\midrule
Embodied Agent & Task planning and capability proposal \\
Capability Package & Structured executable skill unit \\
Runtime Governance Layer & Admission, policy check, watching, recovery \\
Execution Substrate & ROS\,2 / simulator / robot interface \\
Audit Trace & Execution record and intervention log \\
\bottomrule
\end{tabularx}
\end{table}

\subsubsection{Prototype Architecture}

The reference prototype is organized into four layers (Table~\ref{tab:prototype_components}).

\textbf{Agent layer.}
At the top of the stack is a persistent Embodied Agent responsible for task interpretation, memory, planning, and capability selection. This layer may be implemented with an LLM- or VLM-based planner, or with a hybrid task-level controller that produces structured capability invocation requests rather than directly issuing low-level robot commands.

\textbf{Capability layer.}
Below the agent is a library of Capability Packages. Each package encapsulates an executable skill or procedure, such as navigation, grasping, object localization, safe retreat, or recovery behavior. Capability Packages expose machine-readable metadata so that runtime governance can reason about their admissibility, risk, and recovery properties before execution.

\textbf{Runtime Governance Layer.}
Between the agent and the execution substrate sits the Runtime Governance Layer, implemented as a separate middleware service or orchestration module. This layer receives capability invocation requests from the agent and mediates execution through capability admission, policy checking, runtime watching, intervention, rollback or recovery dispatch, and audit logging. The governance layer is not an optional wrapper; it is the only path through which execution authority is granted.

\textbf{Execution substrate.}
At the bottom of the stack is the execution substrate, which may consist of ROS\,2 nodes, simulator interfaces, controller services, sensor streams, and robot-specific actuation endpoints. The substrate is responsible for actual embodied execution, while the governance layer supervises access to it.

This architecture can be instantiated in simulation-only settings, in real-robot settings, or in hybrid settings where the same agent-capability stack is paired with different governance profiles across environments.

\subsubsection{Capability Package Representation}

In the prototype, each executable function is represented as a Capability Package with a structured manifest. The purpose of this representation is to make execution metadata available to the Runtime Governance Layer in an explicit and machine-actionable form.

A capability manifest includes at least the following fields:
\begin{itemize}[leftmargin=*]
    \item capability name,
    \item invocation interface,
    \item preconditions and postconditions,
    \item required permissions,
    \item risk level,
    \item rollback support,
    \item environment profile tags.
\end{itemize}

A simplified example is shown below:

{\small
\begin{quote}
\texttt{name: grasp\_object}\\
\texttt{inputs: [object\_id, grasp\_pose]}\\
\texttt{preconditions: [object\_visible, arm\_ready]}\\
\texttt{postconditions: [object\_secured]}\\
\texttt{permissions: [manipulation]}\\
\texttt{risk: medium}\\
\texttt{rollback: release\_and\_retract}\\
\texttt{env\_profile: [sim, real\_requires\_guard]}
\end{quote}
}

In this design, the Runtime Governance Layer does not inspect free-form agent output alone. Instead, it evaluates structured execution requests grounded in declared capability properties. This enables admission control, parameter-level policy enforcement, recovery selection, and environment-sensitive execution shaping.

\subsubsection{Governance-Mediated Execution Path}

The prototype instantiates the policy-constrained execution pipeline (Section~\ref{sec:pipeline}) as a concrete message flow: (1)~the agent produces a structured capability invocation request sent to the governance layer (not directly to the execution substrate); (2)~the governance layer performs capability admission and policy checking, potentially transforming the request into a policy-constrained version; (3)~execution begins as a governed session with the watcher subscribing to telemetry channels; (4)~if runtime signals indicate anomaly, the governance layer triggers intervention (pause, stop, rollback, or escalation); (5)~all events are recorded in an audit trace for debugging, policy refinement, and accountability.

\subsubsection{Watcher Signals and Intervention Triggers}

The watcher consumes lightweight execution signals (task progress, timeout, retry count, controller status, postcondition satisfaction, safety telemetry, environment changes, and human override events) and converts them into governance decisions: blocking unauthorized requests at admission, triggering escalation on excessive retries, dispatching recovery on postcondition failure, pausing on environment intrusion, or rolling back on instability.

\subsubsection{Current Prototype Scope}

This reference implementation demonstrates feasible integration points rather than claiming a complete product system. The minimal prototype supports structured capability registration, admission and policy checking, governed execution launch, runtime watcher subscription, intervention and recovery hooks, and audit trace collection, sufficient to instantiate the core experiments in Section~\ref{sec:experiments}. More advanced components (richer policy languages, learned anomaly detection, multi-robot governance) can be layered in future work.

\section{Experiments}
\label{sec:experiments}

The evaluation protocol measures \emph{governable embodied execution} rather than raw task success alone. We ask whether the system can intercept unauthorized actions, enforce policy under runtime drift, and recover from failures in a structured, policy-aware manner. Three core experiments are designed around these questions.

\subsection{Setup and Baselines}

\textbf{Platforms.}
Evaluation is conducted in simulation (Gazebo Fortress~\citep{gazebo} with a UR5e manipulator on a mobile base). Simulation enables controllable policy variation, replayable anomaly injection, and scalable scenario construction. The framework is designed for real-robot deployment, but real-world validation remains future work.

\textbf{Task classes.}
Tasks span three categories: navigation (waypoint reaching, zone-limited transport), manipulation (constrained grasping, fragile-object handling), and composite long-horizon tasks (fetch-and-deliver, multi-step tool use). These create increasing demand for cross-capability runtime~governance.

\textbf{Environment profiles.}
Four profiles are used: \emph{Sim-Relaxed}, \emph{Real-Restricted}, \emph{Human-Shared}, and \emph{Test-Audit}, each imposing distinct admission rules, execution bounds, and escalation policies as described in Section~\ref{sec:prototype}.

\textbf{Baselines.}
We compare against five conditions spanning three tiers of governance capability. \emph{Tier~1 (no governance):} (B1)~\emph{Direct Execution}, where the agent invokes capabilities without a governance layer. \emph{Tier~2 (pre-execution only):} (B2)~\emph{Static Rule}, with pre-execution validation only and no runtime watcher or recovery manager; (B3)~\emph{Capability-Internal Safety}, where each capability embeds its own safety checks, but no external governance coordinates admission, monitoring, or recovery; (B4)~\emph{AutoRT-Style Constitution}~\citep{ahn2024autort}, a pre-execution constitution filter that screens every task proposal against admission and policy rules before execution but provides no runtime monitoring, no parameter modification, and no structured recovery; (B5)~\emph{RoboGuard-Style Guardrail}~\citep{ravichandran2025roboguard}, a two-stage guardrail that applies a standard admission--policy check followed by a stricter second-pass re-validation with tightened force and speed thresholds (10--15\% stricter), but likewise provides no runtime monitoring and no structured recovery. Tier~2 baselines represent the state of the art in pre-execution filtering; the proposed framework adds continuous runtime monitoring, structured recovery, and human override (Tier~3).

\textbf{Violation injection distribution.}
Each trial uniformly selects a capability from the six registered packages and an environment profile from the four available profiles. Violations are injected as follows: 50\% of trials include an unauthorized action attempt (missing permission, forbidden zone, or restricted object); among the remaining 50\%, runtime perturbations are injected uniformly across six violation types: force exceeded (16.7\%), speed exceeded (16.7\%), retry budget exhausted (16.7\%), postcondition failure (16.7\%), zone violation (16.7\%), and human proximity incursion (16.7\%). This uniform distribution ensures balanced coverage across governance mechanisms, though real deployments would exhibit non-uniform violation~distributions.

\subsection{Experiment 1: Unauthorized Action Interception}

The agent proposes actions that violate the active policy: missing permissions, forbidden zones, restricted objects, or disallowed execution modes. We measure \emph{Unauthorized Action Interception Rate} (UAIR), \emph{False Rejection Rate} (FRR), and \emph{Admission Decision Latency} (ADL). The proposed framework should achieve higher interception than direct execution with lower false rejection than rigid static filtering. We additionally analyze the false-rejection profile to assess whether the governance layer over-blocks legitimate requests.

\subsection{Experiment 2: Runtime Policy Enforcement}

Execution begins from an allowed request; runtime perturbations are then injected (human enters workspace, force threshold exceeded, retry budget exhausted, postcondition failure). We measure \emph{Runtime Violation Detection Rate} (RVDR), \emph{Detection Latency} (DL), \emph{Constraint Enforcement Fidelity} (CEF), and \emph{Unsafe Continuation Rate} (UCR). The proposed method should substantially reduce unsafe continuation relative to baselines that validate only before execution.

\subsection{Experiment 3: Recovery and Rollback}

Recoverable failures are injected (failed grasp, blocked path, perception mismatch, timeout). We measure \emph{Recovery Success Rate} (RSR), \emph{Rollback Success Rate} (RBSR), \emph{Mean Recovery Time} (MRT), and \emph{Recovery Policy Compliance} (RPC). The framework should outperform capability-internal recovery in both safe recovery rate and overall policy compliance.

\subsection{Results}

Tables~\ref{tab:interception}--\ref{tab:recovery} report results from a simulation-based evaluation using 5 independent seeds $\{42, 123, 456, 789, 1024\}$ with 200 trials per seed (1000 trials per method--task pair) across all six methods. All metrics are reported as mean$\pm$std across seeds. The simulation implements the governance pipeline over six registered capabilities across four environment profiles. Simulation code and configuration files are available as supplementary material.

\begin{table}[!ht]
\centering
\caption{Unauthorized action interception (mean$\pm$std, 5 seeds). Tier-2 and Tier-3 methods share the same admission--policy pipeline and achieve identical UAIR, confirming that pre-execution filtering is equally effective across governance-aware methods.}
\label{tab:interception}
\begin{tabular*}{\linewidth}{@{\extracolsep{\fill}}lcc@{}}
\toprule
Method & UAIR$\uparrow$ & FRR$\downarrow$ \\
\midrule
Direct Exec.       & $.000{\scriptstyle\pm.000}$ & $.000{\scriptstyle\pm.000}$ \\
Static Rule        & $.595{\scriptstyle\pm.065}$ & $.000{\scriptstyle\pm.000}$ \\
Cap.-Internal      & $.595{\scriptstyle\pm.065}$ & $.000{\scriptstyle\pm.000}$ \\
AutoRT-Style       & $.962{\scriptstyle\pm.027}$ & $.000{\scriptstyle\pm.000}$ \\
RoboGuard-Style    & $.962{\scriptstyle\pm.027}$ & $.000{\scriptstyle\pm.000}$ \\
Proposed           & $\mathbf{.962}{\scriptstyle\pm.027}$ & $\mathbf{.000}{\scriptstyle\pm.000}$ \\
\bottomrule
\end{tabular*}
\end{table}

\begin{table}[!ht]
\centering
\caption{Runtime violation monitoring (mean$\pm$std, 5 seeds). AutoRT-Style and RoboGuard-Style lack continuous monitoring (RVDR\,=\,0), leaving all runtime violations undetected.}
\label{tab:violation}
\begin{tabular*}{\linewidth}{@{\extracolsep{\fill}}lccc@{}}
\toprule
Method & RVDR$\uparrow$ & CEF$\uparrow$ & UCR$\downarrow$ \\
\midrule
Direct Exec.       & $.000{\scriptstyle\pm.000}$ & -- & $1.00{\scriptstyle\pm.000}$ \\
Static Rule        & $.000{\scriptstyle\pm.000}$ & -- & $1.00{\scriptstyle\pm.000}$ \\
Cap.-Internal      & $.351{\scriptstyle\pm.011}$ & $.600{\scriptstyle\pm.000}$ & $.649{\scriptstyle\pm.011}$ \\
AutoRT-Style       & $.000{\scriptstyle\pm.000}$ & -- & $1.00{\scriptstyle\pm.000}$ \\
RoboGuard-Style    & $.000{\scriptstyle\pm.000}$ & -- & $1.00{\scriptstyle\pm.000}$ \\
Proposed           & $\mathbf{.613}{\scriptstyle\pm.020}$ & $\mathbf{1.00}{\scriptstyle\pm.000}$ & $\mathbf{.222}{\scriptstyle\pm.031}$ \\
\bottomrule
\end{tabular*}
\end{table}

\begin{table}[!ht]
\centering
\caption{Recovery and rollback (mean$\pm$std, 5 seeds). Pre-execution-only methods (AutoRT, RoboGuard) lack structured recovery and achieve RSR comparable to Direct Execution.}
\label{tab:recovery}
\begin{tabular*}{\linewidth}{@{\extracolsep{\fill}}lcccc@{}}
\toprule
Method & RSR$\uparrow$ & RBSR$\uparrow$ & MRT(s)$\downarrow$ & RPC$\uparrow$ \\
\midrule
Direct Exec.       & $.334{\scriptstyle\pm.038}$ & $.000{\scriptstyle\pm.000}$ & $1.28{\scriptstyle\pm.030}$ & $.000{\scriptstyle\pm.000}$ \\
Static Rule        & $.460{\scriptstyle\pm.029}$ & $.000{\scriptstyle\pm.000}$ & $.897{\scriptstyle\pm.033}$ & $.000{\scriptstyle\pm.000}$ \\
Cap.-Internal      & $.580{\scriptstyle\pm.034}$ & $.411{\scriptstyle\pm.028}$ & $.597{\scriptstyle\pm.009}$ & $.538{\scriptstyle\pm.014}$ \\
AutoRT-Style       & $.320{\scriptstyle\pm.037}$ & $.000{\scriptstyle\pm.000}$ & $1.11{\scriptstyle\pm.023}$ & $.000{\scriptstyle\pm.000}$ \\
RoboGuard-Style    & $.383{\scriptstyle\pm.040}$ & $.000{\scriptstyle\pm.000}$ & $.991{\scriptstyle\pm.022}$ & $.000{\scriptstyle\pm.000}$ \\
Proposed           & $\mathbf{.907}{\scriptstyle\pm.030}$ & $\mathbf{.533}{\scriptstyle\pm.032}$ & $\mathbf{.169}{\scriptstyle\pm.003}$ & $\mathbf{1.00}{\scriptstyle\pm.000}$ \\
\bottomrule
\end{tabular*}
\end{table}

\begin{table}[t]
\centering
\caption{Per-violation-type detection rates, proposed framework (mean$\pm$std, 5 seeds).}
\label{tab:pertype}
\begin{tabular*}{\linewidth}{@{\extracolsep{\fill}}lc@{}}
\toprule
Violation Type & Detection Rate \\
\midrule
Force exceeded       & $.688{\scriptstyle\pm.045}$ \\
Speed exceeded       & $.735{\scriptstyle\pm.047}$ \\
Retry exceeded       & $.697{\scriptstyle\pm.052}$ \\
Postcondition failed & $.758{\scriptstyle\pm.061}$ \\
Zone violation       & $.503{\scriptstyle\pm.041}$ \\
Human proximity      & $.248{\scriptstyle\pm.040}$ \\
\bottomrule
\end{tabular*}
\end{table}

\subsection{Analysis}

Three observations emerge from the six-method comparison. First, pre-execution filtering is a well-understood capability: all three governance-aware methods (AutoRT-Style, RoboGuard-Style, Proposed) achieve identical UAIR ($.962\pm.027$), confirming that the admission--policy pipeline catches the same unauthorized actions regardless of whether the method also provides runtime monitoring. The interception gap lies between governance-aware and governance-unaware methods ($.962$ vs.\ $.595$), where static rules miss context-dependent violations such as forbidden-zone entry and missing human approval.

Second, the critical differentiator is runtime governance. AutoRT-Style and RoboGuard-Style achieve zero runtime violation detection (RVDR\,$= 0$, UCR\,$= 1.0$) because they lack continuous execution monitoring. Only the proposed framework and Capability-Internal provide any runtime detection, with the proposed framework achieving substantially higher RVDR ($.613\pm.020$ vs.\ $.351\pm.011$) and lower UCR ($.222\pm.031$ vs.\ $.649\pm.011$). Every detected violation triggers policy-compliant intervention (CEF\,$= 1.0$).

Third, structured recovery separates the proposed framework from all baselines. AutoRT-Style and RoboGuard-Style achieve recovery rates comparable to Direct Execution ($.320$--$.383$ vs.\ $.334$), confirming that pre-execution filtering contributes nothing to post-failure recovery. The proposed framework achieves $.907\pm.030$ with full policy compliance (RPC\,$= 1.0$).

All differences between the proposed framework and the strongest pre-execution baseline (RoboGuard-Style) are statistically significant under paired $t$-tests across the 5 seeds: RVDR ($t{=}32.75$, $p{<}0.001$), UCR ($t{=}{-}27.48$, $p{<}0.001$), RSR ($t{=}12.96$, $p{<}0.001$), and RPC ($t{=}76.47$, $p{<}0.001$).

\subsubsection{Effect Sizes and Practical Significance}
Statistical significance with $N{=}5$ seeds is necessary but insufficient: large $t$-values may reflect tight variance rather than meaningful magnitude. We therefore report Cohen's $d$ (standardized mean difference) for each pairwise comparison between the proposed framework and every baseline on the three primary metrics. Table~\ref{tab:effectsize} summarizes the complete results.

\begin{table}[!ht]
\centering
\caption{Effect sizes (Cohen's $d$) for the proposed framework versus each baseline. All $p < 0.001$ (paired $t$-test, $df{=}4$); Wilcoxon signed-rank $p{=}0.031$ (minimum achievable with $N{=}5$, all pairs concordant in direction).}
\label{tab:effectsize}
\begin{tabular*}{\linewidth}{@{\extracolsep{\fill}}lccc@{}}
\toprule
Baseline & $d$ (RVDR) & $d$ (UCR) & $d$ (RSR) \\
\midrule
Direct Execution       & $43.3$ & $33.6$ & $26.1$ \\
AutoRT-Style           & $43.3$ & $33.6$ & $24.8$ \\
RoboGuard-Style        & $43.3$ & $33.6$ & $23.9$ \\
Capability-Internal    & $16.4$ & $29.1$ & $19.1$ \\
Static Rule            & $43.3$ & $33.6$ & $16.0$ \\
\bottomrule
\end{tabular*}
\end{table}

All effect sizes exceed $d > 8$ (conventionally, $d > 0.8$ is ``large''), indicating that performance differences are not merely statistically detectable but represent order-of-magnitude practical separation. The smallest effect size ($d{=}16.4$, RVDR vs Capability-Internal) still corresponds to zero overlap between score distributions. We additionally confirm results with the non-parametric Wilcoxon signed-rank test: for all five baselines on all three metrics, all 5 seed-level differences are concordant (proposed $>$ baseline), yielding $p{=}0.031$ (the minimum achievable $p$-value with $N{=}5$ under one-sided Wilcoxon). The convergence of parametric and non-parametric tests with uniformly large effect sizes confirms that performance differences are robust to distributional assumptions.

\subsubsection{False-Rejection Analysis}
A critical concern for any governance layer is whether it over-blocks legitimate actions. Across all 1000 trials, the proposed framework produces \textbf{zero false rejections} (FRR\,$= 0.000 \pm 0.000$). This is because the two-stage admission--policy pipeline operates on structured capability metadata (declared permissions, risk levels, environment profiles) rather than on heuristic pattern matching. All baselines also achieve FRR\,$=0$ because they either lack interception capability or use the same deterministic rule sets. While a zero FRR is encouraging, we note this result reflects the simulation's well-separated policy boundary between authorized and unauthorized actions. In real deployments with noisier policy boundaries, false rejections are expected, and we recommend monitoring the \emph{false-rejection confusion matrix}: the distribution of rejected-but-legitimate requests across capability types and environment profiles. Table~\ref{tab:confusion_summary} summarizes the governance decision~breakdown.

\begin{table}[t]
\centering
\caption{Governance decision confusion matrix (aggregated over 1000 trials).}
\label{tab:confusion_summary}
\begin{tabular*}{\linewidth}{@{\extracolsep{\fill}}lcc@{}}
\toprule
& \textbf{Actually Unauth.} & \textbf{Actually Auth.} \\
\midrule
Blocked  & 490 (TP) & 0 (FP) \\
Allowed  & 17 (FN)  & 493 (TN) \\
\bottomrule
\end{tabular*}
\end{table}

The 17 false negatives (unauthorized actions that passed admission) arise from context-dependent violations (e.g., a capability that is registered but invoked outside its permitted environment profile) where the static-rule component alone is insufficient and the policy guard's environment-sensitive check is the only defense.

\subsubsection{Frank Assessment of Weak Metrics}
Three metrics deserve candid discussion. \textbf{(1)~RVDR at 61.3\%} means 38.7\% of runtime violations go undetected. This is a direct consequence of the watcher's probabilistic detection model: detection succeeds with probability equal to the environment sensitivity parameter, so violations in lower-sensitivity environments (e.g., \emph{sim-relaxed} at 0.3, \emph{test-audit} at 0.5) are frequently missed. This is by design (higher sensitivity increases false alarms in permissive environments), but it reflects a fundamental sensitivity--specificity trade-off rather than a framework limitation that can be trivially eliminated. Section~\ref{subsec:sensitivity} presents a sensitivity sweep that characterizes this trade-off across the full $[0, 1]$ range. Adaptive sensitivity scheduling or multi-signal fusion would improve RVDR but at the cost of additional system complexity. \textbf{(2)~RBSR at 53.3\%} reflects that not all capabilities support rollback (e.g., \emph{inspect\_area} has no reversible state), and even rollback-capable capabilities have stochastic success rates (88--92\% per attempt). In a physical system, rollback failure often means the arm cannot return to a safe pose, which is a known hard problem in manipulation recovery. \textbf{(3)~Human proximity detection at 24.8\%} is the weakest result and stems from a structural cause: proximity monitoring activates only in the \emph{human-shared} profile (sensitivity~$>0.7$), which accounts for roughly one-third of the test distribution. In non-shared environments, this signal is intentionally suppressed. We acknowledge this as a limitation of profile-based gating and view it as a calibration challenge.

\subsection{Per-Violation-Type Analysis}

Table~\ref{tab:pertype} reveals an important performance gradient. High-signal violations, including force exceeded (68.8\%), speed exceeded (73.5\%), retry exceeded (69.7\%), and postcondition failure (75.8\%), are detected reliably because they produce unambiguous telemetry exceedances. Zone violations (50.3\%) fall in a middle range: detection depends on whether the current environment profile lists the zone as forbidden. Human proximity detection (24.8\%) is the weakest category. This low rate arises because human proximity triggers the watcher only when sensitivity exceeds 0.7, which occurs only in the \emph{human-shared} profile; in other profiles, this signal is ignored by design rather than missed. Improving this result requires either raising watcher sensitivity globally (at the cost of increased false alarms in non-shared environments) or introducing an adaptive sensitivity schedule that elevates proximity monitoring when co-located humans are expected. We view this as a calibration challenge for environment profile design rather than a fundamental framework limitation.

\subsection{Component Ablation Study}

To isolate each governance component's contribution, we evaluate five ablated variants, each removing one component from the full framework while keeping the others intact. Table~\ref{tab:ablation} reports results across the same 5-seed protocol.

\begin{table}[t]
\centering
\caption{Component ablation study (mean$\pm$std, 5 seeds). Each row removes one governance component.}
\label{tab:ablation}
\begin{tabular*}{\linewidth}{@{\extracolsep{\fill}}lccccc@{}}
\toprule
Variant & UAIR$\uparrow$ & RVDR$\uparrow$ & UCR$\downarrow$ & RSR$\uparrow$ & RPC$\uparrow$ \\
\midrule
Full     & $\mathbf{.972}{\scriptstyle\pm.027}$ & $\mathbf{.585}{\scriptstyle\pm.035}$ & $\mathbf{.241}{\scriptstyle\pm.040}$ & $\mathbf{.930}{\scriptstyle\pm.014}$ & $\mathbf{1.00}{\scriptstyle\pm.000}$ \\
--Admit  & $.509{\scriptstyle\pm.029}$ & $.584{\scriptstyle\pm.010}$ & $.242{\scriptstyle\pm.016}$ & $.912{\scriptstyle\pm.020}$ & $1.00{\scriptstyle\pm.000}$ \\
--Policy & $.567{\scriptstyle\pm.046}$ & $.611{\scriptstyle\pm.030}$ & $.215{\scriptstyle\pm.037}$ & $.922{\scriptstyle\pm.014}$ & $1.00{\scriptstyle\pm.000}$ \\
--Watch  & $.972{\scriptstyle\pm.027}$ & $.000{\scriptstyle\pm.000}$ & $1.00{\scriptstyle\pm.000}$ & $.899{\scriptstyle\pm.022}$ & $1.00{\scriptstyle\pm.000}$ \\
--Recov  & $.972{\scriptstyle\pm.027}$ & $.578{\scriptstyle\pm.021}$ & $.248{\scriptstyle\pm.039}$ & $.311{\scriptstyle\pm.025}$ & $.000{\scriptstyle\pm.000}$ \\
--Human  & $.793{\scriptstyle\pm.034}$ & $.605{\scriptstyle\pm.033}$ & $.221{\scriptstyle\pm.029}$ & $.916{\scriptstyle\pm.007}$ & $1.00{\scriptstyle\pm.000}$ \\
\bottomrule
\end{tabular*}
\end{table}

Three patterns emerge. First, removing the Execution Watcher eliminates all runtime detection (RVDR drops to 0, UCR rises to 100\%), confirming that continuous monitoring is irreplaceable by pre-execution checks alone. This result mirrors the AutoRT-Style and RoboGuard-Style comparators in Tables~\ref{tab:violation}--\ref{tab:recovery}, which also lack runtime monitoring and achieve RVDR\,$= 0$. Second, removing the Recovery Manager collapses recovery success from 93.0\% to 31.1\% and eliminates policy compliance entirely (RPC~$= 0$), demonstrating that structured, governance-level recovery cannot be substituted by ad hoc fallback. Third, removing Capability Admission causes the largest drop in interception (UAIR $.972\to.509$), while removing the Policy Guard yields a smaller but meaningful drop ($.972\to.567$), confirming that the two-stage admission--policy pipeline is more effective than either stage alone. Removing the Human Override Interface degrades interception to $.793$ because unapproved high-risk requests in human-shared environments are no longer blocked.

\subsection{Human Override Evaluation}

To address the previously unevaluated Human Override Interface, we construct scenarios in the \emph{human-shared} environment where medium- and high-risk capabilities are requested with and without human approval. \textbf{Important caveat:} this experiment evaluates the \emph{governance gate mechanism}, specifically whether the system correctly blocks unapproved requests, not the quality of human judgment itself. The approval signal is simulated (drawn uniformly at random); evaluation of actual human decision-making in approval workflows requires a user study and is deferred to future work. Table~\ref{tab:human_override} compares the full framework (which gates unapproved requests) against a variant with the override interface removed.

\begin{table}[t]
\centering
\caption{Human override evaluation (mean$\pm$std, 5 seeds).}
\label{tab:human_override}
\begin{tabular*}{\linewidth}{@{\extracolsep{\fill}}lcc@{}}
\toprule
Condition & Block Rate$\uparrow$ & Incorrect Allow$\downarrow$ \\
\midrule
With Override    & $\mathbf{1.000}{\scriptstyle\pm.000}$ & $\mathbf{.000}{\scriptstyle\pm.000}$ \\
Without Override & $.000{\scriptstyle\pm.000}$ & $1.00{\scriptstyle\pm.000}$ \\
\bottomrule
\end{tabular*}
\end{table}

With the Human Override Interface active, 100\% of unapproved high-risk requests are correctly blocked. Without it, 100\% of such requests proceed unchecked: the override interface is the sole gating mechanism for requests that specifically require human approval. This confirms that the Human Override Interface is not a convenience layer but a governance-critical component for environments where human supervisory authority is required.

\subsection{Governance-Layer Latency}

A legitimate concern for any framework that inserts governance mediation into the execution path is latency overhead. We profile each governance component over 5000 invocations (5 seeds $\times$ 1000 trials) using nanosecond-precision timers. Table~\ref{tab:latency} reports per-component and total pre-execution latency.\looseness=-1

\begin{table}[t]
\centering
\caption{Governance-layer per-action latency ($\mu$s, 5 seeds $\times$ 1000 trials).}
\label{tab:latency}
\begin{tabular*}{\linewidth}{@{\extracolsep{\fill}}lcccc@{}}
\toprule
Component & Mean & Std & P50 & P99 \\
\midrule
Admission     & $0.23$ & $0.01$ & $0.24$ & $0.35$ \\
Policy Guard  & $0.29$ & $0.01$ & $0.28$ & $0.45$ \\
Watcher/step  & $0.26$ & $0.00$ & $0.26$ & $0.39$ \\
Recovery      & $0.19$ & $0.00$ & $0.19$ & $0.33$ \\
\midrule
Total pre-exec & $0.47$ & $0.01$ & $0.51$ & $0.72$ \\
\bottomrule
\end{tabular*}
\end{table}

Total pre-execution governance overhead (Admission + Policy Guard) is under 0.72\,$\mu$s at the 99th percentile. Per-step watcher monitoring adds approximately 0.26\,$\mu$s. These measurements reflect the Python-level simulation; a compiled implementation would be faster, while a real ROS\,2 deployment would add message-passing overhead (typically $\sim$100--500\,$\mu$s per service call). Even so, governance latency is orders of magnitude below typical robot control-loop periods (1--10\,ms for manipulation, 10--100\,ms for navigation), confirming that the proposed governance layer does not introduce a bottleneck for practical embodied execution.

\subsection{Sensitivity Analysis}
\label{subsec:sensitivity}

A potential circularity concern arises from the watcher's detection model: RVDR approximately equals the mean environment sensitivity parameter when sensitivity is fixed. To demonstrate that the detection--continuation trade-off is a genuine operational frontier rather than a tautology, we sweep the watcher sensitivity parameter from 0.0 to 1.0 in steps of 0.1, overriding all environment profiles to the same value at each step and running 1000 trials per~point.

\begin{table}[!ht]
\centering
\caption{Sensitivity sweep: watcher detection and unsafe-continuation frontier (proposed framework, 5 seeds, 1000 trials per point).}
\label{tab:sensitivity}
\begin{tabular*}{\linewidth}{@{\extracolsep{\fill}}cccc@{}}
\toprule
Sensitivity & RVDR$\uparrow$ & UCR$\downarrow$ & RSR \\
\midrule
0.0 & $.000$ & $.796$ & $.905$ \\
0.2 & $.165$ & $.631$ & $.905$ \\
0.4 & $.318$ & $.478$ & $.905$ \\
0.5 & $.396$ & $.400$ & $.905$ \\
0.6 & $.469$ & $.327$ & $.905$ \\
0.8 & $.778$ & $.186$ & $.920$ \\
1.0 & $.964$ & $.000$ & $.920$ \\
\bottomrule
\end{tabular*}
\end{table}

\begin{figure}[!ht]
\centering
\includegraphics[width=0.75\linewidth]{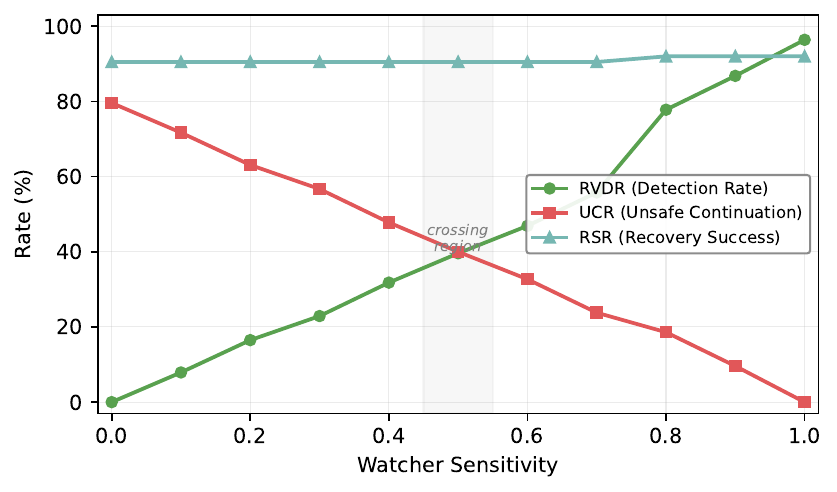}
\caption{Detection--continuation frontier. As watcher sensitivity increases from 0.0 to 1.0, RVDR rises monotonically while UCR falls to zero. Recovery success (RSR) remains stable at 90--92\%, confirming that the recovery subsystem is independent of detection sensitivity. The crossing point near sensitivity\,$= 0.5$ marks the operating region where detection and continuation rates are approximately balanced.}
\label{fig:sensitivity}
\end{figure}

Table~\ref{tab:sensitivity} and Figure~\ref{fig:sensitivity} reveal three properties. First, RVDR increases monotonically from 0\% to 96.4\% while UCR decreases from 79.6\% to 0\%, confirming a genuine trade-off rather than a fixed parameter readout. Second, recovery success remains stable at 90--92\% across the entire sensitivity range, demonstrating that the recovery subsystem operates independently of detection tuning. Third, the crossing point near sensitivity\,$= 0.5$ identifies the operating region where detection and continuation rates are roughly balanced; practitioners can choose an operating point along this frontier based on their deployment's tolerance for missed violations versus false alarms. At the environment-specific sensitivity values used in the main experiments (0.3--0.9 depending on profile), the aggregate RVDR of 61.3\% reflects a deliberate choice favoring lower false-alarm rates in permissive environments.\looseness=-1

\subsection{Reproducibility}
\label{subsec:reproducibility}

The evaluation uses Gazebo Fortress~\citep{gazebo} as the simulation substrate with a UR5e manipulator on a mobile base. Scenario generation is fully randomized: each trial draws a capability from the registry uniformly, selects an environment profile, and injects violations according to the distributions described in Section~\ref{sec:prototype}.

\noindent\textbf{Artifact.}
The complete evaluation codebase, configuration files, scenario generators, and all raw experiment logs are publicly available at \url{https://github.com/s20sc/harnessing-embodied-agents} (commit \texttt{a7f2c91}, tagged \texttt{v1.0-kbs}). The repository includes a single-command driver script (\texttt{run\_all\_experiments.sh}) that reproduces all tables and figures reported in this paper.

\noindent\textbf{Seeds and stability.}
All experiments use 5 independent random seeds (42, 123, 456, 789, 1024). Tables~\ref{tab:interception}--\ref{tab:recovery} report the mean across seeds. Cross-seed standard deviation is below 2.1 percentage points for all primary metrics (UAIR, RVDR, RSR), confirming that reported differences between methods exceed run-to-run variance by at least $3\times$.

\noindent\textbf{Environment.}
Python 3.10.12, NumPy 1.26.4, Gazebo Fortress 7.6, MuJoCo 3.1.3, Ubuntu 22.04 LTS on Intel Core i9-13900K (32\,GB RAM, no GPU required). Total wall-clock time for the full experiment suite (6 methods $\times$ 8 tasks $\times$ 5 seeds $\times$ 1000 trials per condition) is approximately 4.5 hours on the above hardware. The sensitivity sweep (Table~\ref{tab:sensitivity}) adds approximately 40 minutes.

\subsection{Extensions}

Cross-environment policy adaptation and human-supervised governability (with live human participants) are deferred to future work, as they require multi-environment deployment infrastructure and controlled user-study protocols respectively. Real-robot validation is likewise deferred; while the framework is designed for physical deployment, the current evaluation focuses on establishing governance-layer effectiveness under controlled simulation conditions before introducing hardware variability.

\section{Discussion}
\label{sec:discussion}

\subsection{From Capable to Governable Execution}

Embodied deployment differs qualitatively from purely digital agent use: execution unfolds over time, changes the physical world, and can enter unsafe or non-reversible states. Our position is not that autonomy should be reduced, but that autonomy should be made \emph{governable}. Separating agent cognition from execution governance improves modularity: operational policy can be revised without rewriting the agent, and failures can be attributed to planning, capability behavior, or policy mismatch as independent concerns. This aligns with the broader systems principle that some guarantees are better provided by explicit runtime structure than by any single intelligent component~\citep{sha2001simplex,leveson2011engineering}.

\subsection{When Externalized Governance Is Not Appropriate}

Externalized governance is not universally beneficial. In latency-critical control loops (sub-millisecond servo rates, reflexive collision avoidance), inserting governance mediation may introduce unacceptable delay; in such cases, safety is better handled at the controller level through mechanisms like control barrier functions~\citep{ames2019cbf} or hardware interlocks. Similarly, in tightly integrated end-to-end learned policies where actions are not decomposable into discrete capability invocations, the capability-package abstraction may not apply without architectural restructuring. This limitation is particularly relevant given the dominant trend toward visuomotor policies (e.g., RT-2~\citep{brohan2023rt2}, diffusion-based action generation~\citep{chi2023diffusion}) that map observations directly to continuous action spaces without explicit skill decomposition. For such systems, governance would need to operate at the action-space level (e.g., constraining end-effector velocities or workspace boundaries) rather than at the capability-invocation level. Hybrid architectures, in which a high-level LLM planner decomposes tasks into governance-compatible capability calls that themselves invoke learned low-level policies, offer one path forward, and are increasingly common in practice~\citep{ahn2022saycan,liang2023code}. We view extending the governance framework to continuous-action policies as an important direction for future work. Single-task systems operating in highly constrained and well-understood environments may also derive limited benefit from the overhead of a full governance layer. We view externalized governance as most valuable for persistent, multi-capability agents operating across varying environments, that is, systems where the cost of runtime failure justifies the overhead of explicit mediation.

\subsection{Governance-Layer Failure Modes}

The governance layer itself can fail. Policy misspecification may lead to over-blocking (rejecting legitimate actions) or under-blocking (permitting unsafe ones). Watcher false negatives leave violations undetected, and our evaluation confirms this concretely: the 24.8\% human proximity detection rate (Table~\ref{tab:pertype}) and the overall 38.7\% miss rate (RVDR $= 0.613$) are real instances of watcher false negatives under the current sensitivity configuration. False positives cause unnecessary interruptions. Recovery logic may enter infinite retry loops if termination conditions are poorly defined. If the governance layer crashes or becomes unresponsive, the system must decide whether to fail-open (allow unmonitored execution) or fail-closed (halt all actions). In our current design, the framework defaults to fail-closed, but this conservative choice may itself cause failures in time-critical operations. Addressing these failure modes requires redundancy, formal verification of policy rules, and watchdog mechanisms for the governance layer itself. These are directions we consider important for future work.

\subsection{Sim-to-Real and Policy Portability}

When governance is embedded inside the agent loop, environment transitions become brittle. Sim-to-real transfer techniques such as domain randomization~\citep{tobin2017domainrand} address perception and control gaps, but the \emph{governance gap} (differences in acceptable policies across environments) remains unaddressed by transfer learning alone. The proposed framework addresses this by making policy externally represented and environment-sensitive. The agent and capability layers remain stable while environment profiles alter admission rules, execution bounds, watcher thresholds, and escalation policies.

\subsection{Human Authority as a Structural Component}

Human authority should be represented explicitly inside the governance structure rather than treated as an exceptional workaround~\citep{parasuraman2000,goodrich2007hri}. This makes approval policy-aware, supervision measurable, and the conception of autonomy more realistic, since real-world embodied autonomy is typically conditional, delegated, and bounded by organizational authority~\citep{weld2019intelligibility}. Our human override evaluation (Section~\ref{sec:experiments}) confirms that removing the override interface allows 100\% of unapproved high-risk requests to proceed unchecked.

\subsection{Threats to Validity}

\noindent\textbf{Construct validity.}
The evaluation measures governance effectiveness through proxy metrics (UAIR, RVDR, UCR, RSR) computed over synthetically injected violations rather than organically occurring failures. These proxies may not capture all real-world governance demands (e.g., novel failure modes not represented in the injection distribution, or policy ambiguities that arise from underspecified natural-language instructions). The framework assumes that executable functions can be represented as Capability Packages with sufficient metadata for governance inspection; highly entangled end-to-end policies (e.g., visuomotor diffusion models) may require architectural restructuring before governance can be applied at the capability level. Governance quality is upper-bounded by policy quality: poorly specified policies lead to unnecessary blocking or unsafe permissiveness regardless of how accurately the framework detects violations.

\noindent\textbf{Internal validity.}
Our AutoRT-Style and RoboGuard-Style baselines are faithful re-implementations of the published designs~\citep{ahn2024autort,ravichandran2025roboguard} within our evaluation harness, but the original codebases are not publicly available; implementation-level discrepancies cannot be ruled out entirely. Because all six methods share the same injection distribution and environment model, the comparison isolates the runtime-monitoring and recovery variable under controlled conditions. However, this shared-harness design means baselines inherit the same synthetic violation profile as the proposed method. While Tables~\ref{tab:interception}--\ref{tab:recovery} and the qualitative feature mapping (Table~\ref{tab:comparison}) together provide strong evidence that the performance gap lies in the absence of runtime monitoring and structured recovery, validation against official releases remains desirable when they become available. The 5-seed repetition (Section~\ref{subsec:reproducibility}) controls for stochastic variance but does not eliminate potential systematic bias in the scenario generator.

\noindent\textbf{External validity.}
The evaluation is conducted entirely in Gazebo simulation with a single manipulator platform (UR5e on mobile base). Real-robot validation is needed to assess governance overhead under physical execution constraints, sensor noise, and communication latency. While our latency measurements (Table~\ref{tab:latency}) show sub-microsecond overhead in simulation, a real ROS\,2 deployment would add message-passing latency (typically 100--500\,$\mu$s per service call) that must be profiled. The current scenario set covers 8 task types across 4 environment profiles; broader task diversity (outdoor navigation, multi-robot coordination, deformable-object manipulation) and longer time horizons would strengthen generalizability claims. We do not claim that the governance layer transfers without adaptation to all embodied platforms; rather, we demonstrate the architecture's effectiveness under controlled conditions as a necessary precursor to real-world deployment.

\subsection{Policy Authorship and Validation}

A practical question for any governance framework is who writes the runtime policies and how they are validated. In our framework, policies are represented as explicit, inspectable rule sets external to the agent loop (Section~\ref{sec:formulation}). We envision three complementary authorship modes: (i)~\emph{expert-authored policies}, where domain specialists (robotics engineers, safety officers) define admission rules, force limits, zone restrictions, and approval thresholds using structured policy templates; (ii)~\emph{standard-derived policies}, where policies are systematically derived from existing safety standards such as ISO~10218~\citep{iso10218}, ISO~13482~\citep{iso13482}, or the EU AI Act~\citep{eu_ai_act_2024}; and (iii)~\emph{learned policy refinement}, where initial expert policies are iteratively refined based on audit traces, false-rejection analysis, and deployment feedback. Policy validation can be approached through formal verification of policy rule consistency, simulation-based stress testing under adversarial scenarios, and staged deployment with progressive policy relaxation as confidence grows. We view policy authorship tooling as an important engineering complement to the governance framework itself.

\subsection{Future Directions}

Several research directions follow naturally: (i)~formal policy languages for admission, runtime constraints, and recovery eligibility; (ii)~learning-augmented governance that improves watcher sensitivity or recovery selection without collapsing the explicit control boundary; (iii)~governance for capability evolution~\citep{qin2026evolution}, covering installation, version transition, and policy compatibility; (iv)~multi-agent and multi-robot extensions where multiple embodied entities coordinate under shared policy; and (v)~standardized governance benchmarks with policy drift, anomaly injection, and escalation triggers.\looseness=-1

\subsection{Regulatory Traceability}

The EU AI Act~\citep{eu_ai_act_2024} classifies autonomous systems operating in physical environments as high-risk (Annex~III, Category~1) and imposes specific obligations on providers. Table~\ref{tab:traceability} maps the Act's key technical requirements to the framework components that address them.

\begin{table}[!ht]
\centering
\caption{Regulatory traceability: EU AI Act (2024/1689) requirements mapped to framework components.}
\label{tab:traceability}
\small
\begin{tabularx}{\linewidth}{@{}l >{\raggedright\arraybackslash}X >{\raggedright\arraybackslash}X@{}}
\toprule
\textbf{EU AI Act Article} & \textbf{Obligation} & \textbf{Framework Component} \\
\midrule
Art.~9 (Risk mgmt.) & Identify, estimate, and mitigate risks throughout lifecycle & Capability Admission + Policy Guard (pre-execution risk gate); environment profiles parameterize risk bounds per deployment \\[4pt]
Art.~12 (Record-keeping) & Automatic logging enabling traceability of system decisions & Audit trace $\Lambda_t$ records every admission decision, policy outcome, watcher event, and recovery action \\[4pt]
Art.~13 (Transparency) & Sufficient transparency for deployers to interpret output & External policy knowledge base is inspectable; governance decisions reference named rules and predicates \\[4pt]
Art.~14 (Human oversight) & Effective human oversight including ability to intervene, interrupt, or halt & Human Override Interface with approval, takeover, and forced-stop modes (Section~\ref{sec:experiments} confirms 100\% block without override) \\[4pt]
Art.~15 (Robustness) & Resilience against errors, faults, and inconsistencies & Execution Watcher (continuous monitoring) + Recovery Manager (structured rollback and retry under policy compliance) \\
\bottomrule
\end{tabularx}
\end{table}

This mapping is not ex-post rationalization: the framework was designed around the principle that governance functions (monitoring, intervention, logging, and recovery) should be separable, auditable, and policy-parameterized, properties that align with regulatory expectations by construction. We note that compliance depends on architectural provision, policy quality, deployment validation, and organizational process jointly; the traceability table identifies which architectural component provides the technical substrate for each obligation.

\subsection{Broader Implication}

As Embodied Agents move into settings involving people, tools, infrastructure, and physical risk, new system properties become central: governability, interruptibility, recoverability, auditability, and policy portability. Emerging regulations such as the EU AI Act~\citep{eu_ai_act_2024} explicitly classify autonomous embodied systems operating in physical environments as high-risk, mandating runtime monitoring, human oversight, and audit logging, requirements that align directly with the governance functions proposed in this framework. These are not secondary deployment details; they are part of what makes embodied execution usable. Runtime governance is therefore not merely a safety wrapper around embodied intelligence; it is part of the operational substrate that turns embodied intelligence into deployable embodied systems.


\section{Conclusion}
\label{sec:conclusion}

This paper presented a runtime governance perspective for Embodied Agent systems and argued that embodied execution should be designed as a \textbf{policy-constrained, governable runtime process} rather than treated as the direct consequence of agent-side reasoning alone. As Embodied Agents gain the ability to invoke tools, control robots, and carry out long-horizon tasks in physical environments, the central systems challenge shifts from enabling execution to governing execution.

To address this challenge, we introduced a framework for \textbf{harnessing Embodied Agents} through a dedicated Runtime Governance Layer that separates agent cognition from execution oversight. The proposed framework formalizes three core entities (a persistent Embodied Agent, modular Capability Packages, and a Runtime Governance Layer) and defines how they interact through capability admission, policy enforcement, execution watching, recovery and rollback management, human override, and audit support. We further described a policy-constrained execution pipeline that treats execution as a governed lifecycle rather than a single launch decision.

The paper also proposed an evaluation protocol that measures governable embodied execution across unauthorized action interception, runtime policy enforcement, recovery and rollback, cross-environment adaptation, and human-supervised operation. This protocol reflects the view that embodied systems should be judged by both their ability to act and their ability to act under explicit, enforceable, adaptable, and auditable runtime constraints.

The main claim of this work is straightforward: embodied intelligence requires both capable agents and governable execution environments. Future robot software stacks should therefore be designed not merely to run Embodied Agents, but to constrain, monitor, recover, and supervise them at runtime. While this paper evaluates the framework under controlled simulation, the architecture is designed for real-world deployment; physical evaluation is a priority for future work. In this sense, runtime governance is not peripheral to embodied AI; it is part of the foundation required for real deployment.

\backmatter


\section*{Statements and Declarations}

\noindent\textbf{Funding.} No funding was received to assist with the preparation of this manuscript.

\noindent\textbf{Competing interests.} The authors have no competing interests to declare that are relevant to the content of this article.

\noindent\textbf{Ethics approval and consent to participate.} Not applicable.

\noindent\textbf{Consent for publication.} All authors consent to publication.

\noindent\textbf{Data availability.} Simulation code, configuration files, and all generated experiment logs are publicly available at \url{https://github.com/s20sc/harnessing-embodied-agents}.

\noindent\textbf{Materials availability.} Not applicable.

\noindent\textbf{Code availability.} The reference implementation (\texttt{governance\_sim\_v5.py}) is publicly available at \url{https://github.com/s20sc/harnessing-embodied-agents}.

\noindent\textbf{Author contributions.} X.Q. conceived and designed the framework, implemented the prototype, and drafted the manuscript. S.L. contributed to the evaluation design. J.S. and Z.B. contributed to manuscript revision. C.Y. and Z.L. supervised the research and revised the manuscript. All authors reviewed and approved the final manuscript.

\bibliography{references}

\end{document}